\renewcommand{\algorithmicrequire}{\textbf{Input:}}
\newcommand{\ouralg}{ORRIC\xspace}
\newcommand{\WoB}[1]{{\small \tikz[baseline=(char.base)]{\node[shape=circle,fill=black,draw,inner sep=0.5pt] (char) {\color{white}#1};}}}
\declaretheoremstyle[%
  spaceabove=3pt,%
  spacebelow=3pt,%
  headfont=\normalfont\bfseries,%
  bodyfont=\normalfont\itshape,%
  postheadspace=0.5em,%
]{theoremstyle} 
\declaretheorem[name={Definition},style=theoremstyle]{definition}
\declaretheorem[name={Assumption},style=theoremstyle]{assumption}
\declaretheorem[name={Theorem},style=theoremstyle]{theorem}
\declaretheorem[name={Corollary},style=theoremstyle]{corollary}
\declaretheorem[name={Lemma},style=theoremstyle]{lemma}
\declaretheorem[name={Property},style=theoremstyle]{property}
\declaretheoremstyle[%
  spaceabove=0pt,%
  spacebelow=3pt,%
  headfont=\normalfont\itshape,%
  postheadspace=1em,%
  qed=\qedsymbol%
]{proofstyle}
\def\BibTeX{{\rm B\kern-.05em{\sc i\kern-.025em b}\kern-.08em
    T\kern-.1667em\lower.7ex\hbox{E}\kern-.125emX}}  
\begin{document} 
\title{Online Resource Allocation for Edge Intelligence with Colocated Model Retraining and Inference} 

\author{
\IEEEauthorblockN{Huaiguang Cai, Zhi Zhou,  Qianyi Huang}
% School of Computer Science and Engineering, 
\IEEEauthorblockA{School of Computer Science and Engineering, Sun Yat-Sen University, China}
\IEEEauthorblockA{Email: caihg3@mail2.sysu.edu.cn, zhouzhi9@mail.sysu.edu.cn, huangqy89@mail.sysu.edu.cn}
% \thanks{This work was supported by the National Science Foundation of China under Grant 62172454, the Guangdong Basic and Applied Basic Research Foundation under Grant 2023B1515020120. The corresponding author is Zhi Zhou.}
}   
\maketitle 
\begin{abstract}
With edge intelligence, AI models are increasingly pushed to the edge to serve ubiquitous users. However, due to the drift of model, data, and task, AI model deployed at the edge suffers from degraded accuracy in the inference serving phase. Model retraining handles such drifts by periodically retraining the model with newly arrived data. When colocating model retraining and model inference serving for the same model on resource-limited edge servers, a fundamental challenge arises in balancing the resource allocation for model retraining and inference, aiming to maximize long-term inference accuracy. This problem is particularly difficult due to the underlying mathematical formulation being time-coupled, non-convex, and NP-hard. To address these challenges, we introduce a lightweight and explainable online approximation algorithm, named \ouralg, designed to optimize resource allocation for adaptively balancing the accuracy of model training and inference. The competitive ratio of \ouralg outperforms that of the traditional Inference-Only paradigm, especially when data drift persists for a sufficiently lengthy time. This highlights the advantages and applicable scenarios of colocating model retraining and inference. Notably, \ouralg can be translated into several heuristic algorithms for different resource environments. Experiments conducted in real scenarios validate the effectiveness of \ouralg.

\end{abstract}

\section{Introduction}

Edge intelligence, the marriage of AI and edge computing, promises to provide ubiquitous users with low-latency, energy-efficient, and privacy-protecting machine learning services by processing data in proximity\cite{zhizhou_ei_20}. However, various types of drift reduce the accuracy of machine learning models in practice, and even worse in edge scenarios when computing resources are limited. Specifically, we classify these drifts into three types: 1)  \emph{model drift}: the distribution of model parameters is changed after deployment (e.g., model compression).  2) \emph{data drift}: the distribution of features or labels shift over time (e.g., domain adaptation, test-time adaptation \cite{etta_niu_22}).  3) \emph{task drift}: the model may be applied to perform unseen tasks (e.g., fine tuning\cite{CNN_taj_16}, embodied AI\cite{embodied_AI}).

% For model drift and task drift, we generally retrain the model. For the data drift problem, two main schemes have been proposed. One is to modify the inference results based on some assumptions about the data distribution without retraining the model\cite{labelshift_wu_21}. The other scheme is to retrain the model, which is a general approach for these three types of drifts. However, while some of these methods are proposed based on an analysis of the drift problem and have some performance guarantees, they all rely heavily on the type of drift assumed\cite{labelshift_wu_21, timecary_rasool_22}. A method that can theoretically analyze all types of drift has not emerged. 

% Furthermore, the methods proposed by researchers from the machine learning community tend to emphasize accuracy while overlooking resource consumption and available real-time computing resources. However, in the domain of edge computing, significant research has been dedicated to reducing resource consumption during model training and inference.
Numerous methods have been proposed to alleviate these drifts, including retraining the deployed model \cite{CNN_taj_16, etta_niu_22, ttasurvey_liang_23, concept_jie_19} or modifying inference results based on certain data distribution assumptions \cite{labelshift_wu_21, timecary_rasool_22}. However, these methods, mainly proposed by researchers from the machine learning community, tend to emphasize accuracy while overlooking resource consumption. Fortunately, in the field of edge computing, significant research such as Ekya\cite{ekya_romil_22}, RECL\cite{RECL_mehrdad_23} and Shoggoth\cite{wang2023shoggoth}, has been proposed to handle drifts by navigating the trade-off between the tasks of model retraining and model inference under the constraints of limited edge resources. Here, we define the scheme of \textbf{retraining the model and performing inference simultaneously on new data} as the \emph{model retraining and inference co-location} paradigm.

Nevertheless, the absence of formal modeling and the reliance on heuristic algorithms in these previous works limit our understanding of the model retraining and inference co-location paradigm. Modeling this paradigm not only provides insights into its advantages and application scenarios but also aids in designing more rational and explainable algorithms that may enjoy better performance and theoretical guarantees.

% Nevertheless, these previous works lack formal modeling for the paradigm, and the algorithms employed are heuristic. A deeper understanding of the paradigm (such as its advantages and application scenarios) can be gained by modeling the model retraining and inference co-location paradigm. Moreover, this modeling facilitates the design of more rational, explainable algorithms with potential improvements in performance and theoretical guarantees.

% However, previous works lacked a formal model for the problem, and the employed algorithms were heuristic. Formally modeling the co-location paradigm for model retraining and inference provides a deeper understanding, revealing its advantages and application scenarios. Moreover, this modeling facilitates the design of more rational, explainable algorithms with potential improvements in performance and theoretical guarantees.

Intuitively, due to limited edge resources, the task of retraining the model on new data and the task of performing inference on new data form a competitive relationship. If there are more retraining resources currently assigned, the current inference accuracy is low and the future accuracy is high; on the contrary, if the retraining resources are currently assigned less, the current inference accuracy is high but the future accuracy is low. Then a central question arises:

% \emph{How can resources be credibly allocated for model retraining and inference co-location to optimize long-term model performance despite the negative effects of various drifts?} 
\emph{How can resources be credibly allocated for model retraining and inference co-location to optimize long-term model performance under various drifts?}
% Intuitively speaking, in the early stages of a task, if we allocate too many resources to inference, the short-term performance of the model will be good, but the future performance will be poor because it fails to fully adapt to data changes;
% Similarly, at the late stage of the task, if we allocate too many resources to training, it will be meaningless that the model will perform poorly in the short term and that the model will perform well in the future.

To answer this question, our work makes the following contributions:

% our contributions to this problem are outlined as follows:
\begin{enumerate}
\item We provide a natural modeling of the model retraining and inference co-location paradigm and demonstrate a corresponding typical and practical system (Section \ref{sec:pf}).
% We model the current model performance in drift scenarios as a function of past retraining configurations and the current inference configuration. Based on drift-independent modeling, we study the problem of online resource allocation and configuration adaptation for edge intelligence with model retraining and inference co-location.
% Our contributions We provide a natural formulation of
% the important problem of equitable data valuation in machine learning. We propose data Shapley value, leveraging
% powerful results from game theory, to quantify the the contribution of individual data points to a learning task. Data
% Shapley uniquely satisfies three natural properties of equitable valuation. Moreover, our empirical studies demonstrate that data Shapley has several additional utilities: 1) it
% gives more insights into the importance of each data point
% than the common leave-one-out score; 2) it can identify
% outliers and corrupted data; 3) it can inform how to acquire
% future data to improve the predictor.

\item  We design a lightweight and explainable algorithm \ouralg  (Section \ref{sec:ALG}) for the paradigm. The proved competitive ratio of \ouralg is strictly better than that of the traditional Inference-Only paradigm when data drift occurs for a sufficiently lengthy time, implying the advantages and application scenarios of model retraining and inference co-location paradigm (Section \ref{sec:pa}).

% \item  \textbf{Competitive analysis and comparison of model retraining and inference co-location paradigm and traditional Inference-Only computing paradigm} (Section \ref{sec:pa}).  We design a lightweight and theoretically guaranteed algorithm (\ouralg) for online resource allocation for model retraining and inference co-location. The proved competitive ratio of \ouralg is strictly better than the tight competitive ratio of the Inference-Only algorithm (corresponding to the traditional computing paradigm that deploys a trained model and then performs inference) when data drift occurs for a sufficiently lengthy time, implying the advantages and applications of model retraining and inference co-location paradigm.

% \item \textbf{Experimental verification} (Section \ref{sec:exp}). Experimental results of \ouralg on CIFAR-10-C validate the effectiveness of model retraining and inference co-location in drift scenarios.
\item Our experimental results of \ouralg on CIFAR-10-C validate the effectiveness of model retraining and inference co-location in drift scenarios (Section \ref{sec:exp}). Our code is available at \url{https://github.com/caihuaiguang/ORRIC}.
\end{enumerate}

% In this study, we make a weak assumption about the effect of drift on the model performance, which makes our modeling applicable to all types of drift. Based on modeling model performance and real-time resource constraints, we design a lightweight and theoretically guaranteed algorithm for online resource allocation for model  inference and retraining co-location. The advantages and applications of model  inference and retraining co-location over traditional inference-only schemes are also theoretically proved.

\section{Background and Related Works}
% We contextualize our work with prior studies on 1) drifts in machine learning, and 2) inference and retraining configuration adapting in edge computing.
We motivate our work with prior studies on 1) drifts in machine learning and 2) inference and retraining configuration adapting in edge computing.

\subsection{Drift in Machine Learning}
% 漂移的概念散乱的出现在各种文献中，但目前还没有一个对所有漂移类型进行总结的综述，我们先将漂移分类，然后简要介绍现有的解决方案，并将我们的方案放在语境中和他们进行对照。
%机器学习的基本流程是针对某一个任务用搜集的数据训练一个机器学习模型。但在实际中，模型、数据、任务在部署到设备前后都可能会发生变化，我们将这种模型、数据、任务的前后的不一致分别称作模型、数据、任务漂移。

% The concept of drift has appeared in various literature, but there is not yet a review that summarizes all drift types. We first categorize the existing drifts and the corresponding solutions, then compare our solutions with them.

The basic process of machine learning is to collect a large amount of data for a task and then use the data to train a machine learning model. However, in practice, the model, data, and task may change after the deployment of the model. We categorize the inconsistency between the training phase and inference phase as model drift, data drift, and task drift.
  
\subsubsection{Model drift} \label{sec:modeldrift}
% 这里要改
% 模型通过量化等操作进行部署后，weifht的分布和原本不一样了，即使在原数据集上表现好，遇到下面的data drift的情况性能会降低很多。
% 为

% DNN compression\cite{DC_Han_15} is commonly adopted for lower latency and improved energy efficiency\cite{zhizhou_ei_20}. However, the distribution of model parameters is changed \cite{model_dong_22} after compression, usually leading to a decrease in the model's accuracy. We classify this inconsistency in the model training phase and inference phase as model drift. Even though the model performance on training data is the same before and after compression, the compressed model has less generalization power on unseen data\cite{cost_jia_22}, also shown in table \ref{tab:cifar10}, coming up with the need for model retraining\cite{ekya_romil_22}.

DNN compression\cite{DC_Han_15} is commonly adopted for lower latency and improved energy efficiency\cite{zhizhou_ei_20}. However, the distribution of model parameters is changed \cite{model_dong_22} after compression, usually leading to a decrease in the accuracy of model. We classify this inconsistency between the model training phase and the inference phase as model drift. Even though model performance on training data remains the same after the compression, the compressed model has less generalization power on unseen data\cite{cost_jia_22}, necessitating model retraining\cite{ekya_romil_22}.
 
% \subsubsection{Data drift}  
% This drift represents the shift in the distribution of features or labels. More precisely, denote $X$ as the feature vector and $y$ as the label, then denote $P_t(X,y)$ as the joint probability density function of $X,y$ at time $t$. Concept drift \cite{concept_jie_19} occurs when $\exists t, P_t(X,y)\ne P_{t+1}(X,y)$. Label shift\cite{labelshift_wu_21} occurs when $\exists t, P_t(y)\ne P_{t+1}(y)$ but $\forall t, P_t(X|y)= P_{t+1}(X|y)$. We generally classify any shift of $P_t(X), P_t(y), P_t(X|y), P_t(y|X), P_t(X,y)$ (such as concept drift, label shift, domain adaptation, or test time adaptation\cite{etta_niu_22}) as data drift here. Corresponding to reality, the proportion of pedestrians, cars, and bicycles varies throughout the day\cite{ekya_romil_22}, which corresponds to the shift of $P_t(y)$, i.e., label shift. Another common phenomenon is that, due to variations of angles, weather conditions, lightning, and sensors\cite{ekya_romil_22}, the appearance of the same class of objects ($P_t(X|y)$) is actually different from that during training while the true label of object ($P_t(y)$) is not changed, which corresponds to the shift of $P_t(X, y)$, i.e., concept drift.

\subsubsection{Data Drift}
This type of drift represents a shift in the distribution of features or labels. Specifically, let $X$ denote the feature vector and $y$ denote the label. We use $P_t(X,y)$ to represent the joint probability density function of $X,y$ at time $t$. Concept drift \cite{concept_jie_19} occurs when there exists a time $t$ such that $P_t(X,y)\ne P_{t+1}(X,y)$. Label shift\cite{labelshift_wu_21} occurs when there exists a time $t$ such that $P_t(y)\ne P_{t+1}(y)$ but for all $t$, $P_t(X|y)= P_{t+1}(X|y)$. In a broader sense, we classify any shift in $P_t(X)$, $P_t(y)$, $P_t(X|y)$, $P_t(y|X)$, $P_t(X,y)$ (such as concept drift, label shift, domain adaptation, or test-time adaptation\cite{etta_niu_22}) as data drift. In real-world scenarios, the proportion of pedestrians, cars, and bicycles may vary throughout the day\cite{ekya_romil_22}, corresponding to label shift ($P_t(y)$ shift). Another common phenomenon is that, due to variations in angles, weather conditions, lighting, and sensors\cite{ekya_romil_22}, the appearance of the same class of objects ($P_t(X|y)$) may differ from that during training, while the true label of the object ($P_t(y)$) remains unchanged, corresponding to concept drift.
% ($P_t(X, y)$ shift)

Several methods, such as the unbiased risk estimator \cite{labelshift_wu_21} and the online ensemble algorithm \cite{ ttasurvey_liang_23}, can alleviate the negative effect of data drift on model performance without retraining based on assumptions on data distribution, such as label shift \cite{labelshift_wu_21} or gradual data evolution \cite{timecary_rasool_22}. However, these methods are heavily dependent on the assumed type of drift and may not be universally applicable. As a result, retraining the model \cite{etta_niu_22} remains a mainstream approach \cite{ttasurvey_liang_23}.

 % for a more comprehensive survey, refer to 
% However, these methods rely heavily on the type of drift assumed and cannot be applied to all types of drift. So the mainstream methods are still retraining the model \cite{etta_niu_22}; see \cite{ttasurvey_liang_23} for a more comprehensive survey.
   
\subsubsection{Task drift}   
% This drift refers to the changing of tasks both in the training and inference phase, such as meta learning\cite{meta_chen_22}, continual learning \cite{etta_niu_22}, transfer learning\cite{ttasurvey_liang_23} and fine tuning\cite{CNN_taj_16}. In addition, embodied AI has received great attention recently, which requires the model to learn by interacting with the real world\cite{embodied_AI}. All these studies expect the model to perform well on new tasks, and retraining the model is nearly the only viable approach.
This drift encompasses changes in tasks during both the training and inference phases, including meta-learning\cite{meta_chen_22}, continual learning \cite{etta_niu_22}, transfer learning\cite{ttasurvey_liang_23}, and fine-tuning\cite{CNN_taj_16}. Additionally, embodied AI has gained considerable attention recently, necessitating models to learn through interactions with the real world\cite{embodied_AI}. All these studies expect the model to perform well on new tasks, and retraining the model is nearly the only viable approach.
% In all these scenarios, the expectation is for the model to excel at new tasks, making retraining the model nearly the only viable approach.
% 基于大量机器学习研究者对drift问题的关注和模型重训练的广泛使用，我们在本项研究中就主要考虑重训练。此外，其实虽然部分机器学习研究者工作对drift的算法有理论分析和保证，但是都比较强依赖于对drift的类型假设。所有这些研究中还未能出现一种可以对所有类型drift进行理论分析的算法。在本项研究中，我们对drift对模型性能的影响作很弱的假设使得我们的建模适用于所有的drift的情况。
% 和这些研究不同的是，为了满足用户满意程度，我们同事考虑训练和推理以及其计算成本问题。
 
% Motivated by the huge focus on the drift problem in the machine learning research community and the widespread use of model retraining, we focus on retraining in this study in terms of model performance improvement.

% Especially the current model performance is modeled as a function of past retraining configuration and current inference configuration.

Motivated by the extensive attention to drift problems in the machine learning research community and the prevalent use of model retraining, we seek to formulate the model performance under these drifts.  Moreover, in contrast to prior works that concentrate solely on model accuracy, our approach considers both model accuracy and the computational cost of model retraining. This makes it more suitable for practical deployment in resource-constrained edge environments.

% Moreover, unlike priori works that focus on model accuracy, we pursue both model accuracy and the computational cost of model retraining, making it more amenable to practical deployment in resource-constrained edge environments.

% We model the current model performance in drift scenarios as the function of past retraining configuration and current inference configuration, and our modeling is drift-independent. Moreover, unlike priori works focus on model accuracy, we pursuit both model accuracy and computational cost of model retraining, making it to be more amenable to practical deployment in resource-constrained edge environment.

\subsection{Inference and Training Configuration Adaption}
Resources provisioned for edge computing are limited\cite{ekya_romil_22}, motivating research on reducing resource consumption for model retraining or inference on edge while meeting basic accuracy requirements, known as model inference or retraining configuration adaptation.
% 但是这些工作一方面他们没有对问题形式化建模，所采用的启发式算法无法提供理论保证，另一方面还未能充分挖掘训练配置和推理配置协同部署的好处。因此，就出现了我们的研究问题：实时地根据设备中可用资源和用户请求数据量为异构的训练和推理配置分配资源，从而提升模型在平均长期性能，来避免各种drift的负面影响？我们将这种缓解drift问题的方案叫做模型重训练和推理混合部署。

\emph{Inference Configuration Adaptation} refers to adapting the content of the inference request or the model used, such as the frame rate of the input video, the resolution of input images\cite{coordi_peng_20}, the type of model\cite{adaptor_kong_22}, or the extent of early exiting in model inference\cite{lien_20}. This adaptation, influencing the corresponding output of the model, is typically determined by the available computing, storage, bandwidth resources, and the difficulty level of the input\cite{coordi_peng_20}.
                      
% 多个不同质量的视频，资源有限的情况下使得最多的正确性可以被达到，有关于像素、处理时间和准确性的curve
% Edge Coordinated Query Configuration for Low-Latency and Accurate Video Analytics\cite{coordi_peng_20} 

\emph{Training Configuration Adaption} refers to adapting the hyperparameters of the training, such as epochs, training data size\cite{real_khani_21}, or the layers performing back-propagation\cite{etta_niu_22}. This adaptation, influencing the model itself, is typically determined by available computing, storage, bandwidth resources, and the performance of the model being used\cite{ekya_romil_22}.

% This adaptation will influence the model itself and is typically decided by the available computing, storage, bandwidth resources, and performance of the model being used\cite{ekya_romil_22}.
% 轮数、样本量、混合精度、神经网络模型训练的层数， 根据可用的计算、存储、贷款资源
%只训练网络的某些层达到减少资源消耗和抗遗忘的作用，etta,shogg

% % 在云端放teacher，设备端仅推理，动态采样训练样本
% Real-Time Video Inference on Edge Devices via Adaptive Model Streaming\cite{real_khani_21}

% % training system 
% Mandheling: mixed-precision on-device {DNN} training with {DSP} offloading 
% \cite{Mandheling_Xu_22}    

% Although there are some studies (such as Ekya\cite{ekya_romil_22} and RECL\cite{RECL_mehrdad_23}) focusing on the trade-off between the tasks of model retraining and model inference under the constraints of limited edge resources, they lack the formal modeling of model retraining and inference co-location paradigm, and the employed algorithms are heuristic. Different from these works, to gain a deeper understanding of the paradigm and design a more rational and explainable algorithm, we model the problem formally and precisely and propose an algorithm with a theoretical guarantee.

Although some studies such as Ekya\cite{ekya_romil_22} and RECL\cite{RECL_mehrdad_23} have explored the trade-off between the tasks of model retraining and model inference under the constraints of limited edge resources, they lack the formal modeling of the model retraining and inference co-location paradigm, and the employed algorithms are heuristic. Differing from these works, we aim to gain a deeper understanding of the paradigm and design a more rational and explainable algorithm by formally modeling the paradigm and proposing a theoretically guaranteed algorithm.

% Although there are some studies (such as Ekya\cite{ekya_romil_22} and RECL\cite{RECL_mehrdad_23}) focusing on the trade-off between accuracy and cost in \emph{model retraining and inference co-location}, they have not fully explored the method mentioned before used in inference and retraining configuration adaption and the algorithms they used for resource allocation are heuristic. Different from these works, we model the problem formally and precisely and design an algorithm with a theoretical guarantee.
 
\noindent\textbf{Remark}: Existing researches on model retraining and inference co-location typically deploy the model on edge \cite{ekya_romil_22} or cloud\cite{RECL_mehrdad_23}. However, we argue that with hardware upgrades \cite{Tiny_Shi_22} and technological advances \cite{Mandheling_Xu_22}, model retraining and inference co-location on devices holds promise for enhanced privacy protection, reduced bandwidth usage and personalized AI models. While some existing frameworks support on-device model training, such as MNN \cite{walle_lv_22}, nntrainer \cite{nntrainer}, TensorFlow Lite \cite{Lite}, PyTorch Mobile \cite{Mobile}, their documentation is not comprehensive, and some lack regular maintenance. We call for further efforts in this direction.

% % 云上teacher，端上student，如果还是用的我们建模的话，下面要修改建模，不是计算的约束了而是带宽的约束
% Shoggoth: Towards Efficient Edge-Cloud Collaborative Real-Time Video Inference via Adaptive Online Learning\cite{wang2023shoggoth}

%详细定义model retraining and inference co-location，下面举个例子反复说这个问题。
% 接着说明这种范式和端、边、云的架构都单个可以结合，也可以放在不同地方。
% 在实际中，虽然有部分框架支持模型的重训练，但是实际使用起来不稳定或者没怎么维护，我们也希望有更多的开发者实现这个功能。
% 描述我们的工作
 
% {POET:} Training Neural Networks on Tiny Devices with Integrated Rematerialization
%                   and Paging\cite{Tiny_Shi_22} 
                  
\section{System Model and Problem Formulation for Model Retraining and Inference Co-location} \label{sec:pf} 
In this section, we present the system model for model retraining and inference colocation in an edge server, and the problem formulation for the dynamical resource allocation to maximize the long-term inference accuracy.
%We first show a typical model  retraining and inference co-location system (\ref{subsc:sys}), and then present our modeling to constraint (\ref{subsc:con}) and model performance (\ref{subsc:per}). We formulate the resource allocation problem  in \ref{subsc:for}, and discuss existing approaches to the problem in \ref{subsc:appr}.

\subsection{System Overview}\label{subsc:sys}

\begin{figure}[t]
    \centerline{\includegraphics[width=0.48\textwidth]{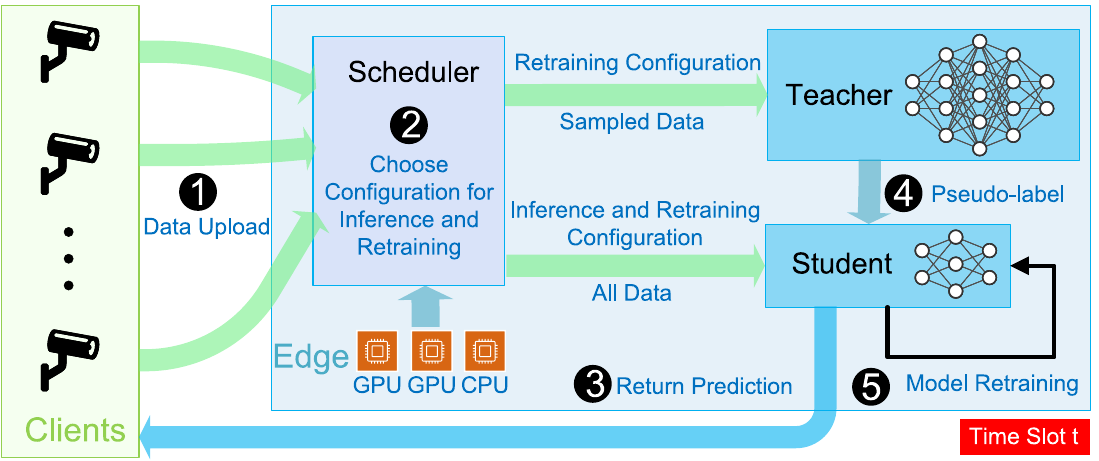}} 
    \caption{Model Retraining and Inference Co-location.} 
    \label{fig: Model}
    \vspace{-0.5cm}
\end{figure}

% In this architecture, an edge server equipped with moderate CPU and GPU resource capacity simultaneously performs model retraining and inference serving,  with the new data stream (or inference requests) collected from a set of nearby device clients (e.g., surveillance cameras) 

% A typical system of model retraining and inference co-location is illustrated in Fig. \ref{fig: Model}, following the setting in Ekya\cite{ekya_romil_22} that uses a high-cost teacher model to supervise a low-cost student model on edge. There is nothing novel about our system, as we focus here on a basic system covering all important elements of model retraining and inference paradigm. For example, Ekya \cite{ekya_romil_22} is equivalent to adding a micro profile to our system to facilitate the decision-making of the scheduler. And RECL\cite{RECL_mehrdad_23}) is equivalent to substituting the student model in our system with a model zoo.
  
% as we focus here on a basic system covering all essential elements of model retraining and inference paradigm.
Following the pilot effort of Ekya\cite{ekya_romil_22}, we adopt a general system architecture as illustrated in Fig. \ref{fig: Model} for edge intelligence with colocated model training and inference. In this architecture, an edge server equipped with moderate CPU and GPU resources simultaneously performs model retraining and inference serving, with the new data stream (or inference requests) collected from a set of nearby device clients (e.g., surveillance cameras) running the same AI application (e.g., object detection). Since manual labeling for the online data stream is not feasible at the edge, the labels for the retraining are obtained from a ``teacher model'' --- a highly accurate but expensive model (with deeper architecture and larger size). Since the inference latency of the teacher model cannot meet the stringent latency requirements of mission-critical edge AI applications such as safety surveillance, we only use it for labeling. Instead, for the actual inference serving and model retraining, a ``student model'' which is less accurate but more responsive and resource-efficient is adopted. Notably, this philosophy of supervising a small student model with a large teacher model has been widely applied in the community of computer vision.    

To model the periodic behavior of model retraining, we assume that the system works in a time-slotted fashion. Each time slot $t \in \mathcal{T} \triangleq \{1, 2, \cdots, T\}$ denotes a ``retraining window'' that designates model retraining once on the newly collected data. Specifically, as shown in Fig. \ref{fig: Model}, at each time slot $t$, the group of device clients first \WoB{1} upload their inference requests to the edge server. Here we use $D_{(t)}$ to denote the amount of data uploaded at time slot $t$. Without loss of generality, we assume that $D_{(t)} \in [D_{\min}, D_{\max}]$ and $D_{\min}>0$. After receiving the data, the scheduler \WoB{2} determines the configuration for model retraining and inference, based on the data amount ($D_{(t)}$) and the available computational resource at the edge server (denoted as $C_{(t)}$). Afterward, the student model will immediately \WoB{3} return predictions of all inference requests to the clients based on inference configuration from the scheduler. Next, some uniformly random-chosen data according to the retraining configuration will be sent to the teacher model to \WoB{4} get the corresponding high-credit labels (or pseudo-labels). The student model \WoB{5} then updates its weight by retraining the model according to the pseudo-labels and the retraining configuration determined by the scheduler. Then in the next time slot $t+1$, the student model can serve the inference request with retrained model weights, thus improving accuracy. 

\subsection{Resource Allocation Model}\label{subsc:con}

\begin{table}[]
\caption{Notations.}
\label{tab:Notations}
\centering 
\begin{tabular}{l l}
\textbf{Notation} &\textbf{Description}\\\hline
% $\mathcal{T}$  & time slot set\\
% $\mathcal{M}$  & retraining configuration set \\ 
% $\mathcal{N}$  & inference configuration set  \\\hline  
$C_{(t)}$     & the available computational resource in time slot $t$.\\ 
$D_{(t)}$     & the amount of uploaded data at the beginning of $t$.    \\
\multirow{2}{*}{$A_i^T,C_i^T$} & the profit and resource consumption of \\ 
& $i$-th retraining configuration. \\
\multirow{2}{*}{$A_j^I,C_j^I$} & the profit and resource consumption of \\ 
& $j$-th inference configuration.\\\hline  
\multirow{2}{*}{$x_i(t)$} & binary variable indicating whether $i$-th retraining \\ 
& configuration is chosen at time slot $t$.\\
\multirow{2}{*}{$y_j(t)$} &  binary variable indicating whether $j$-th inference\\ 
& configuration is chosen at time slot $t$.\\\hline
\end{tabular}
\vspace{-0.5cm}
\end{table}

%When colocating model retraining and inference serving at the edge server, they may compete for the limited computational resource such as CPU and GPU, especially when the data arrival $D_{(t)}$ bursts. Therefore, the resource allocation to model retraining and inference serving faces a fundamental tradeoff between the retrained model’s accuracy and the inference accuracy. Specifically, if we allocate more resource to model retraining to improve its accuracy, the accuracy of the current model inference may diminish due to reduced resource allocation (e.g., ). allows it to finish faster and provide a more accurate model sooner. At the same time, during the retraining, taking away resources from the inference job lowers its accuracy (because it may have to sample the frames of the video to be analyzed).

%We assume that the available inference and retraining configuration do not change during the whole time span $\mathcal{T}$. For resources, we focus on computing resources for the reason that both machine learning training\cite{cost_jia_22} and inference are computation-intensive.
% 实际上指什么要指出来
% \subsubsection{Retraining Configuration} In each time slot $t$, the scheduler chooses one and only one retraining configuration from a retraining configuration set $\mathcal{M} \triangleq \{1, 2, \cdots, M\}$. To represent this choice, we use binary variables $x_i(t) \in \{0, 1\}$, where $x_i(t) = 1$ indicates that the scheduler selected the $i$-th retraining configuration at time slot $t$, and $x_i(t) = 0$ indicates otherwise. Formally, we have:

When colocating model retraining and inference serving at the edge server, they may compete for the limited computational resource such as CPU and GPU, especially when the data arrival $D_{(t)}$ bursts. Therefore, the resource allocation to model retraining and inference serving faces a fundamental tradeoff between the retrained model’s accuracy and the inference accuracy. Specifically, if we allocate more resource to model retraining to improve its accuracy, the accuracy of the current model inference would diminish due to reduced resource allocation. Vice versa, if we take away resource from model retraining to inference serving, the current inference accuracy would increase but the subsequent inference may decrease due to the reduced accuracy of the retrained model.

The knob to navigate the tradeoff between the retrained model’s accuracy and the inference accuracy is the configuration of both model retraining and inference, which controls the resource-accuracy tradeoff of both model retraining and inference. For model retraining, the configuration refers to the hyperparameters of the training, such as the number of epochs,
training data size \cite{real_khani_21}, or the layers performing back-propagation \cite{etta_niu_22}. For these hyperparameters, a larger value results in higher accuracy, but also at cost of more resource demand. For model inference, the configuration includes the hyperparameters such as frame rate/resolution of the input video, the compressed variant of the model\cite{adaptor_kong_22}, or the early exit point of the branchy model\cite{lien_20}.

\subsubsection{Retraining configuration adaption} In each time slot $t$, the scheduler selects one retraining configuration from the set of feasible configurations $\mathcal{M} \triangleq \{1, 2, \cdots, M\}$. This selection is represented by binary variables $x_i(t) \in \{0, 1\}$, where $x_i(t) = 1$ indicates the $i$-th retraining configuration is selected at time slot $t$. Formally, this can be expressed as:

\begin{equation}
    x_i{(t)}\in\{0,1\}, \quad\forall i\in\mathcal{M}, \ \forall t \in \mathcal{T}, \label{ctr:x}
\end{equation}
\begin{equation}
 \sum_{i=1}^Mx_i{(t)}=1, \quad  \forall t \in \mathcal{T}. \label{ctr:x=1}
\end{equation}

To characterize the resource-accuracy of the $i$-th retraining configuration, we use $C_i^T$ to denote the resource demand (per data sample, measured by FLOPs or MACs) and $A_i^T$ to aid in modeling the tested accuracy.
Given an amount of $D_{(t)}$ data samples at time slot $t$, the total amount of resource demand of model retraining (including pseudo-labeling) is $D_{(t)}C_i^T$. 

\subsubsection{Inference configuration adaption} similar to the retraining configuration, in each time slot $t$, the scheduler selects one inference configuration from the set $\mathcal{N} \triangleq \{1, 2, \cdots, N\}$. This selection is represented by binary variables $y_j(t) \in \{0, 1\}$, where $y_j(t) = 1$ indicates that the $j$-th inference configuration is selected at time slot $t$. Formally, we have: 

\begin{equation}
     y_j{(t)}\in\{0,1\}, \quad\forall j\in\mathcal{N}, \ \forall t \in \mathcal{T},                 \label{ctr:y}
\end{equation}
\begin{equation}
\sum_{j=1}^Ny_j{(t)}=1, \quad  \forall t \in \mathcal{T}. \label{ctr:y=1}
\end{equation} 

%Inference configuration adaptation refers to adapting the content of the inference request or the model used, such as the frame rate of the input video, the resolution of input images\cite{coordi_peng_20}, the type of model\cite{adaptor_kong_22}, or the extent of early exiting in model inference\cite{lien_20}. This adaptation will influence the corresponding output of the model.

% Inference configuration refers to the frame rate of the input video, the resolution of input image\cite{coordi_peng_20}, the model size\cite{adaptor_kong_22}, etc., used in model inference.
The $j$-th inference configuration consumes $D_{(t)}C_j^I$ computational resources (measured by FLOPs or MACs). The profit, $D_{(t)}A_j^I$, is the corresponding $j$-th result of normalizing the model accuracy for all $N$ inference configurations using the maximum value as a reference. Both $D_{(t)}C_j^I$ and $D_{(t)}A_j^I$ are easy to calculate in practice. In our experiments, the MACs and accuracy of different inference configurations on the test dataset, whose distribution is the same as the training dataset, are used to represent $D_{(t)}C_j^I$ and $D_{(t)}A_j^I$.

Let $A_{\min}^T$ and $A_{\max}^T$ denote the minimum and maximum of the set $\{A_i^T \mid \forall i \in \mathcal{M}\}$. Similarly, let $A_{\min}^I$ and $A_{\max}^I$ represent the minimum and maximum of the set $\{A_j^I \mid \forall j \in \mathcal{N}\}$. Then $A_{\min}^T=0$ and $A_{\min}^I>0$ according to a natural assumption that if the computational resources are scarce and for the consideration of satisfaction from users, model retraining is not unnecessary compared with model inference.

% \begin{table}[]
% \caption{Notations}
% \label{tab:Notations}
% \centering 
% \begin{tabular}{l l}
% \textbf{Notation} &\textbf{Description}\\\hline
% % $\mathcal{T}$  & time slot set\\
% % $\mathcal{M}$  & retraining configuration set \\ 
% % $\mathcal{N}$  & inference configuration set  \\\hline  
% $C_{(t)}$     & the available computational resource in time slot $t$\\ 
% $D_{(t)}$     & the amount of uploaded data at the beginning of $t$    \\
% \multirow{2}{*}{$A_i^T,C_i^T$} & the profit and resources consumption of \\ 
% & $i$-th retraining configuration\\ 
% \multirow{2}{*}{AIj,CIjA_j^I,C_j^I} & the profit and resources consumption of \\ 
% & jj-th inference configuration\\\hline  
% \multirow{2}{*}{xi(t)x_i(t)} & binary variable indicating whether ii-th retraining \\ 
% & configuration is chosen at time slot tt\\
% \multirow{2}{*}{yj(t)y_j(t)} &  binary variable indicating whether jj-th inference\\ 
% & configuration is chosen at time slot tt\\\hline
% \end{tabular}
% \vspace{-0.5cm}
% \end{table}

\subsubsection{Computational resources constraint} We use $C_{(t)}$ to denote the available computational resource at time slot $t$. And we suppose that there is at least one feasible solution to the following inequality,  regardless of the value of $D_{(t)}$ and $C_{(t)}$:
% Then the computation resource consumed by retraining and inference on uploaded data (whose amount is denoted by D(t)D_{(t)}, see ???????????????????????????????????????????????????????????????????????????????????????????????????????????????????????????????????????????????????????????????????????????\ref{subsc:sys}) must be lower than C(t)C_{(t)}, and we suppose that there is at least one feasible solution to the following inequality regardless of the value of D(t)D_{(t)} and C(t)C_{(t)}:
\begin{equation}
D_{(t)}\sum_{i=1}^MC_i^T x_i{(t)}+ D_{(t)}\sum_{j=1}^NC_j^Iy_j{(t)}\le C_{(t)}, \ \forall t \in \mathcal{T}. \label{ctr:ctdt}
\end{equation}

\subsection{Long-term Accuracy Model} \label{subsc:per}
% Now let us focus on the basic factors that impact model performance under various drifts. The first factor comes from the extent to which the drift occurs. In general, the more severe the drift is, the worse the model will be. The second one comes from the degree of adaptation of the model to the drifts obtained by retraining. The study of learning curves \cite{lc_tobias_15} has confirmed that training has a diminishing marginal effect on model performance. Based on these two factors, we model the basic model performance at time slot tt as: f(∑t−1τ=1D(τ)∑Mi=1xi(τ)ATi∑t−1τ=1D(τ))f\left(\frac{\sum_{\tau=1}^{t-1}D_{(\tau)}\sum_{i=1}^Mx_i(\tau)A_i^T}{\sum_{\tau=1}^{t-1}D_{(\tau)}}\right).

% There are two basic factors affecting model performance in the presence of various drifts: (1) Drift Severity: Drift can lead to a misalignment between the model's training data and the current serving data distribution. The predictive power of the model diminishes further as the severity of drift increases. (2) Adaptation Degree: The adaptation degree of model to drifts is obtained by model retraining. The study of learning curves \cite{lc_tobias_15} has confirmed that training has a diminishing marginal effect on model performance. Based on these two factors, we model the basic model performance at time slot $t$ as: $f\left(\frac{\sum_{\tau=1}^{t-1}D_{(\tau)}\sum_{i=1}^Mx_i(\tau)A_i^T}{\sum_{\tau=1}^{t-1}D_{(\tau)}}\right)$.

%The accuracy improvements during re-training heavily depend on the data-distribution shift
We model the basic model performance at time slot $t$ as $f\left(\frac{\sum_{\tau=1}^{t-1}D_{(\tau)}\sum_{i=1}^Mx_i(\tau)A_i^T}{\sum_{\tau=1}^{t-1}D_{(\tau)}}\right)$. Our modeling of model performance under various drifts is based on two key observations: (1) Irrespective of the type of drift, model performance declines to a minimum gradually if the model is not retrained regularly. (2) The increase in model performance resulting from training exhibits a diminishing marginal effect\cite{lc_tobias_15}.

To incorporate the first observation into our modeling, it is essential to explore the correlation between current testing data and the previous data used to retrain the model. However, precisely determining the relationship between previous data used for retraining and current testing data is usually challenging or maybe compute-intensive. So similar to the maximum entropy principle, we make the following assumption: \textbf{every previously used retraining configuration has the same effect on current model performance}. That is where $\frac{\sum_{\tau=1}^{t-1}D_{(\tau)}\sum_{i=1}^Mx_i(\tau)A_i^T}{\sum_{\tau=1}^{t-1}D_{(\tau)}}$ comes from. Then to align with the second observation, we introduce a function $f$ that maps the average learning extent of all past data to the current model performance to represent the average influence of the drifts on model performance over time. If there are no drifts, wherein the current test data follows the same distribution as the training dataset on which the model has been fully trained, then model retraining has a small influence on model performance and $f$ reduces to a constant function.

% To incorporate this diminishing marginal effect of training on model performance into our modeling,

%\noindent\textbf{Remark 1}: The profit, $D_{(t)}A_i^T$, made by retraining configuration $i$, is a virtual term to help the modeling. We actually set $D_{(t)}A_i^T = \beta D_{(t)}C_i^T$ in practice (where the $\beta$ is a normalization coefficient, making $\max_i\{A_i^T\}=1$) because of the observation from\cite{lc_tobias_15} that allocating more resources to training generally results in improved model performance.

% \noindent\textbf{Remark 1}: The profit, $D_{(t)}A_i^T$, made by retraining configuration $i$, is a virtual term to help the modeling. We actually set $D_{(t)}A_i^T = \beta D_{(t)}C_i^T$ in practice (where the $\beta$ is a normalization coefficient, making $\max_i\{A_i^T\}=1$) because of the observation from\cite{lc_tobias_15} that allocating more resources to training generally results in improved model performance.

In the study of learning curves\cite{lc_tobias_15}, the expression of the function $f$ can take on various forms, such as power functions like $f(x) = c - ax^{-\alpha}$, exponential functions like $f(x) = \exp(a + \frac{b}{x} + c\log(x))$, logarithmic functions like $f(x) = \log(a\log(x) + b)$, or even a weighted linear combination of these forms. Here, $x$ represents training time, number of iterations, or training dataset size, and $f(x)$ denotes accuracy on the validation set. For a more comprehensive understanding of the potential expressions of the function $f$, please refer to Figure 1 in \cite{lc_tobias_15}. In our study, rather than making assumptions about the exact expression of $f$, we identify that these expressions share a common property: they are concave and increasing. Consequently, we introduce the following assumption about $f$:

% Rather than making assumptions about the exact expression of $f$, we identify that these expressions share a common property: they are concave and increasing.

% As all these expressions share the common property of being increasing and concave and the diminishing marginal effect of training to model performance, we introduce the first assumption about $f$:

\begin{assumption}\label{asp:1}
The function $f(x)$ is increasing, concave, and continuously differentiable over the interval $[0, A^T_{\max}]$, and $f(0)>0$, but its analytical expression is unknown.
\end{assumption}

% We introduce an increasing concave function $f$ (see \ref{subsc:per} for more details) to represent the diminishing marginal utility of computational resources on model performance, which is also observed in \cite{lc_tobias_15}.

Other assumptions about the relationship between retraining configuration and model performance may be reasonable as well. For instance, if the current model performance is only related to past data within a time window (e.g., in-context learning), then model performance can be modeled as $f\left(\frac{\sum_{\tau=t-w}^{t-1}D_{(\tau)}\sum_{i=1}^Mx_i(\tau)A_i^T}{\sum_{\tau=1}^{t-1}D_{(\tau)}}\right)$, where $w$ is the window size. And if the current data is more related to nearby data than former data, model performance would be $f\left(\frac{\sum_{\tau=1}^{t-1}D_{(\tau)}\alpha^{t-1-\tau}\sum_{i=1}^Mx_i(\tau)A_i^T}{\sum_{\tau=1}^{t-1}\alpha^{t-1-\tau}D_{(\tau)}}\right)$, where $\alpha$ is a positive decay factor less than 1. A more complex modeling is that the formula of $f$ may change over time, i.e., $f_t\left(\frac{\sum_{\tau=1}^{t-1}D_{(\tau)}\sum_{i=1}^Mx_i(\tau)A_i^T}{\sum_{\tau=1}^{t-1}D_{(\tau)}}\right)$. We leave these variants for future research.

% However, analyzing this problem and providing an algorithm with a performance guarantee is too complex. We believe that the corresponding mathematical techniques are currently lacking in the existing literature. We also leave this exploration for future research.

In the presence of an unknown analytical formula for the function $f$, we introduce the following assumptions to facilitate the algorithm design:

\begin{assumption}\label{asp:4}
The value of $f(A^T_{\max})$ is known. And a positive lower bound of $f'(A^T_{\max})$, denoted as $L$, is known.
\end{assumption}

% In reality, the accuracy of the trained model on the test dataset which shares the same distribution as the training dataset, can be utilized to estimate $f(A^T_{\max})$. This is because the accuracy mentioned reflects the performance of the trained model when drifts do not occur. So it can also be considered as the highest achievable accuracy ($f(A^T_{\max})$) when the model employs the best retraining configuration and the best inference configuration in the presence of drifts.
In practice, the accuracy of the trained model on the test dataset with the same distribution as the training dataset serves as an estimate for $f(A^T_{\max})$. This is because this accuracy indicates the model's performance when drifts are absent, effectively representing the highest achievable accuracy ($f(A^T_{\max})$) under the best retraining and inference configurations in the presence of drifts.
The value of $f'(A^T_{\max})$, reflecting the rate of improvement in model accuracy when the model always uses the best retraining configuration under drifts, can only be determined with prior knowledge of the drifts. In our experiments, we set a small constant (e.g., 0.01) as the value of $L$ based on the accuracy improvement on the mentioned test dataset between the last two epochs of the training process. For a typical task, the optimal $L$ can be determined empirically by experimenting with the algorithm with various values of $L$ in the real world. Approximating an unknown value ($L$) is much simpler than approximating an unknown function ($f$).

%  We set a small constant (e.g., 0.01) as the value of $L$ based on the end 

% ?, which represents the model performance limits.

% Additionally, we can calculate a positive lower bound for $f'(A^T_{\max})$ by observing the validation accuracy near the end of the training process. 

% Now, the model performance is modeled as $f\left(\frac{\sum_{\tau=1}^{t-1}D_{(\tau)}\sum_{i=1}^Mx_i(\tau)A_i^T}{\sum_{\tau=1}^{t-1}D_{(\tau)}}\right)\sum_{j=1}^Ny_j(t)A_j^ID_{(t)}$.

Moreover, the inference configuration, viewed as the utilization of the model, also plays a significant role in determining its performance. We assume that \textbf{model performance at time slot $t$ is directly proportional to the profit of the inference configuration used at that time}, i.e., $\sum_{j=1}^Ny_j(t)A_j^ID_{(t)}$. While output accuracy with different inference configurations may vary over time, we argue that $A_j^I$ represents the average utilization degree of the $j$-th inference configuration on the model and is assumed to be a constant known in advance. 

Now, the problem of maximizing long-term average accuracy within the constraints of varying computing resources over time, with decision variables ($x_i(t), y_j(t)$) representing the chosen retraining and inference configurations, is formulated as:
\begin{align*}  
\label{pro:P} \tag{P} 
\max_{x_i(t),y_j(t)} & \sum_{t=1}^T f\left(\frac{\sum_{\tau=1}^{t-1}D_{(\tau)}\sum_{i=1}^Mx_i(\tau)A_i^T}{\sum_{\tau=1}^{t-1}D_{(\tau)}}\right)\sum_{j=1}^Ny_j(t)A_j^ID_{(t)}  \\
\mathrm{s.t.} 
&~\text{Constraints } (\ref{ctr:x})-(\ref{ctr:ctdt}) .
\end{align*}

\subsection{Existing Approaches} \label{subsc:appr}

To the best of our knowledge, no online algorithms for a similar problem (\ref{pro:P}) have been proposed in the literature so far. There are three main difficulties when dealing with it: (1) The objective function is nonconvex-nonconcave, as demonstrated in Theorem \ref{thm:non-convex} with its proof in Appendix~\ref{sec:non-convex}. (2) Decision variables are heavily coupled.  (3) The analytical formula for $f$ is commonly unknown in practice.

\begin{theorem}\label{thm:non-convex}
If $f(x)$ is a concave function and defined on $[0, A^T_{\max}]$, then $f(x)y$, defined on $[0, A^T_{\max}]\times [A^I_{\min}, A^I_{\max}]$, is a nonconvex-nonconcave function.
\end{theorem}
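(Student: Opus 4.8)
The plan is to exhibit, at a single interior point of the domain, a local failure of both the convexity and the concavity inequalities; since global convexity (resp.\ concavity) forces the defining inequality to hold at every point and in every direction, one such witness rules out both properties simultaneously. Writing $g(x,y) = f(x)y$, the natural first move is to inspect the Hessian
\[
\nabla^2 g(x,y) = \begin{pmatrix} f''(x)\,y & f'(x) \\ f'(x) & 0 \end{pmatrix},
\]
whose determinant is $-\bigl(f'(x)\bigr)^2 \le 0$. Wherever $f'(x) > 0$ this determinant is strictly negative, so the Hessian has one positive and one negative eigenvalue; an indefinite Hessian at an interior point is incompatible with the matrix being positive semidefinite (needed for convexity) or negative semidefinite (needed for concavity). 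Because $f$ is increasing with $f'(A^T_{\max}) \ge L > 0$ (Assumptions \ref{asp:1} and \ref{asp:4}), such a point exists in the interior, e.g.\ any $x_0$ slightly below $A^T_{\max}$ paired with an interior $y_0 \in (A^I_{\min}, A^I_{\max})$; this requires $A^I_{\min} < A^I_{\max}$ and $A^I_{\min} > 0$, both of which hold in the model.

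Since the statement only posits that $f$ is concave, and Assumption \ref{asp:1} only guarantees $f \in C^1$, not $C^2$, I would back up the Hessian heuristic with a first-order argument that avoids $f''$. Fix an interior base point $(x_0,y_0)$ with $f'(x_0)>0$ and small $\delta,\eta>0$ keeping all points in the domain, and compare $g(x_0,y_0)$ with the average of the values of $g$ at two diametrically opposite points. For the positively correlated pair $(x_0-\delta,y_0-\eta)$, $(x_0+\delta,y_0+\eta)$ one obtains
\[
\tfrac12\bigl[g(x_0-\delta,y_0-\eta)+g(x_0+\delta,y_0+\eta)\bigr]-g(x_0,y_0) = -y_0\,G(\delta) + \tfrac{\eta}{2}\bigl[f(x_0+\delta)-f(x_0-\delta)\bigr],
\]
where $G(\delta):=f(x_0)-\tfrac12[f(x_0+\delta)+f(x_0-\delta)]\ge 0$ is the concavity gap of $f$; for the anti-correlated pair the cross term flips sign, yielding $-y_0G(\delta)-\tfrac{\eta}{2}[f(x_0+\delta)-f(x_0-\delta)]$.

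I then read off the two sign violations. The anti-correlated expression is strictly negative (both summands are $\le 0$, and the cross term is $<0$ because $f$ is increasing near $x_0$), so the midpoint value exceeds the endpoint average, contradicting convexity. For the positively correlated expression I use that $C^1$-concavity makes the gap sublinear, $G(\delta)=o(\delta)$, while $\tfrac{\eta}{2}[f(x_0+\delta)-f(x_0-\delta)]\sim \eta f'(x_0)\delta$ is linear in $\delta$ with positive coefficient; hence for all sufficiently small $\delta$ the expression is strictly positive, so the midpoint value lies below the endpoint average, contradicting concavity. Exhibiting both failures at the same base point completes the argument.

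The main obstacle I anticipate is precisely the differentiability bookkeeping in the second step: the clean determinant picture wants $f''$, which is not assumed, so the delicate point is verifying that the concavity gap is genuinely $o(\delta)$ (this follows from $[f(x_0+\delta)-f(x_0)]/\delta$ and $[f(x_0)-f(x_0-\delta)]/\delta$ both tending to $f'(x_0)$) and that a point with $f'(x_0)>0$ can be secured in the interior via continuity of $f'$. I would also flag the degenerate caveat that the claim fails for constant $f$ (then $g$ is linear, hence both convex and concave), so the proof genuinely relies on $f$ being non-constant, which the increasing hypothesis of Assumption \ref{asp:1} supplies.
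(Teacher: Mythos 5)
Your proof is correct, and at bottom it follows the same strategy as the paper's own argument in Appendix \ref{sec:non-convex}: both reduce the claim to exhibiting points where the defining inequality for convexity (resp.\ concavity) of $g(x,y)=f(x)y$ fails, and both record the Hessian-indefiniteness shortcut available when $f$ is twice differentiable. The difference is one of completeness, and it is in your favor. The paper stops at the identity $E=\alpha\left[f(\bar x)-f(x_1)\right]y_1+(1-\alpha)\left[f(\bar x)-f(x_2)\right]y_2$ together with the conditional ``if there exist values making $E<0$, then $g$ is nonconcave,'' and never actually exhibits such values --- yet the existence of witnesses is precisely the content of the theorem. You supply them at a common base point: the anti-correlated pair kills convexity outright (both terms nonpositive, and the cross term strictly negative because concavity plus $f'(x_0)>0$ forces $f(x_0+\delta)-f(x_0-\delta)\ge \delta f'(x_0)>0$), while the positively correlated pair kills concavity via the only delicate estimate in the argument, namely that the concavity gap $G(\delta)$ is $o(\delta)$ while the cross term grows like $\eta f'(x_0)\delta$. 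Note that this step needs only differentiability of $f$ at the single point $x_0$, so your argument is in fact slightly more general than the $C^1$ hypothesis of Assumption \ref{asp:1} that you invoke. Your two caveats are also genuine and go unmentioned in the paper: as literally stated, Theorem \ref{thm:non-convex} fails for constant $f$ (then $f(x)y$ is linear, hence both convex and concave) and for a degenerate inference range $A^I_{\min}=A^I_{\max}$, so the result implicitly leans on the non-constancy provided by $f'(A^T_{\max})\ge L>0$ (Assumption \ref{asp:4}) and on a non-degenerate set of inference profits --- exactly the hypotheses your witness construction uses.
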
  
For the third difficulty, a promising technique is \emph{bandit convex optimization}\cite{zeroth_Lattimore_21}. The method usually adds random noise to the decision variable and then estimates the gradient information \cite{bco_zhihua_21} of the function, but these techniques are specifically designed for convex functions and may not be readily applicable to our situation. Moreover, the decision variables of our problem are discrete, and therefore, the technique of adding random noise is not suitable. 
  
The well-known primal-dual\cite{onlinenon_anu_12} method in online algorithms is also not applicable to our problem. Even if we can approximate the analytical expression of $f$, it is difficult to find the dual function of the original function due to the heavy coupling between decision variables. Moreover, since the problem is nonconvex-nonconcave, there may not be strong duality as in linear programming. Then even if the dual function is found and solved, it may violate the original problem constraints. Review\cite{time_simon_20} has more details on the time-varying convex optimization algorithms.

% % 强调问题新，以前的方法不适用，需要开发新的技术，和传统问题的区别

% % 据我们所知，还没有文献研究过类似形式的问题的在线算法。主要难点有三个1.目标函数非凸 2.时间上强耦合 3.f函数的表达式在实际中通常是未知的，只有函数的某个点的零阶信息或者一阶信息上下界可用。

% %对于第三点，比较有希望解决这个问题的技术是Bandit Convex Optimization，但是 
% % 给决策变量加上一定的随机噪声，通过看对应的函数值进而去估计函数的梯度信息，但这些技术都是为凸函数开发的，此外我们问题的决策变量离散的，随机性噪声的技术无法适用
% Bandit Convex Optimization\cite{bco_zhihua_21}
% Zeroth-Order Stochastic Convex Bandits\cite{zeroth_Lattimore_21} 

% % 即使我们通过测量的方法近似地得到f函数表达式的，由于x,y之间的强耦合还有原问题非凸性，想用primal dual方法找到原函数的对偶函数较为困难，此外由于问题是非凸性，可能不存在像线性规划中一样的强对偶性，即便找到了对偶函数并且求解也可能会违反原问题约束。
% Online Primal-Dual:\cite{onlinenon_anu_12}
% %更多关于是时变的凸函数优化的内容可以见review
% time convex\cite{time_simon_20}

% % 我们方案的思想，开环控制；更进一步是闭环控制，利用反馈信息（f()y）做出判断，未来工作 

% \subsection{Possible Mathematical Techniques Review}

% bandit convex optimization

% primal-dual

\section{Algorithm Design}\label{sec:ALG}

We follow a three-step procedure to design a lightweight and theoretically guaranteed algorithm: (i) Leverage the concave property of $f$ to move the decision variables $\{x_i(t)\}$ out of $f$ (Lemma \ref{lem:bigger}). Then the problem of interest changes from (\ref{pro:P}) to (\ref{pro:Q}). (ii) Decouple the interaction of $\{x_i(t), y_j(t)\}$ by involving a particular regularization term (Lemma \ref{lem:zt}), with the concerned problem specialized to (\ref{pro:D}). Then \ouralg is proposed to solve (\ref{pro:Dt}), the subproblem of (\ref{pro:D}) in every time slot. (iii) Lemma \ref{lem:PD} and Theorem \ref{thm:cr_OR} are used to facilitate the proof of the competitive ratio of \ouralg.

% our primary approach to approximately solving the problem (???\ref{pro:P}) involves regularization and leveraging the fundamental properties of the concave function in the objective function.

\subsection{Deal with the Target Function of (\ref{pro:P})}

\begin{lemma}\label{lem:bigger}
$f(x)\le Lx+ g(A^T_{\max})$, where $g(A^T_{\max})=f(A^T_{\max})-LA^T_{\max}$.
\end{lemma}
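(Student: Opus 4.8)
The plan is to exploit the concavity of $f$ together with the lower bound $L\le f'(A^T_{\max})$ guaranteed by Assumption~\ref{asp:4}. First I would observe that the claimed linear upper bound $Lx+g(A^T_{\max})$ is precisely the line of slope $L$ passing through the point $(A^T_{\max}, f(A^T_{\max}))$: substituting $x=A^T_{\max}$ yields $LA^T_{\max}+f(A^T_{\max})-LA^T_{\max}=f(A^T_{\max})$. Hence the statement is equivalent to showing that this particular secant dominates $f$ throughout $[0,A^T_{\max}]$.

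The cleanest route is to define the gap function $h(x)\triangleq Lx+g(A^T_{\max})-f(x)$ on $[0,A^T_{\max}]$ and show $h(x)\ge 0$. By construction $h(A^T_{\max})=0$. Differentiating gives $h'(x)=L-f'(x)$. Because $f$ is concave and continuously differentiable (Assumption~\ref{asp:1}), its derivative $f'$ is nonincreasing, so for every $x\le A^T_{\max}$ we have $f'(x)\ge f'(A^T_{\max})\ge L$, whence $h'(x)\le 0$. Thus $h$ is nonincreasing on $[0,A^T_{\max}]$, and since it vanishes at the right endpoint, $h(x)\ge h(A^T_{\max})=0$ for all $x$ in the interval, which is exactly the claim.

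Equivalently, and perhaps more directly, I would invoke the tangent-line characterization of concavity at the point $A^T_{\max}$, namely $f(x)\le f(A^T_{\max})+f'(A^T_{\max})(x-A^T_{\max})$, and then weaken $f'(A^T_{\max})$ to $L$. The one subtlety — the only place where care is needed — is the sign bookkeeping: on the domain $x-A^T_{\max}\le 0$, so multiplying the nonpositive quantity $x-A^T_{\max}$ by the larger positive coefficient $f'(A^T_{\max})\ge L>0$ can only decrease it, giving $f'(A^T_{\max})(x-A^T_{\max})\le L(x-A^T_{\max})$. Substituting this and rearranging $f(A^T_{\max})+L(x-A^T_{\max})=Lx+g(A^T_{\max})$ closes the argument.

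There is essentially no hard part here; the only thing to watch is the direction of the inequality when replacing $f'(A^T_{\max})$ by its lower bound $L$, which flips precisely because $x$ ranges to the left of $A^T_{\max}$. Each hypothesis is used once: concavity and differentiability to obtain a valid tangent (or equivalently a monotone derivative), and the bound $L\le f'(A^T_{\max})$ to pass from the unknown slope to the known constant $L$ on which the later algorithm design relies.
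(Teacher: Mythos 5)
Your proposal is correct, and your second ("more direct") argument is exactly the paper's own proof: the tangent-line inequality $f(x)\le f(A^T_{\max})+f'(A^T_{\max})(x-A^T_{\max})$ from concavity, followed by weakening $f'(A^T_{\max})$ to $L$ using the sign of $x-A^T_{\max}$. Your gap-function variant $h(x)=Lx+g(A^T_{\max})-f(x)$ is just an equivalent repackaging of the same idea, so there is nothing substantively different to compare.
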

\begin{proof}
     $f(x)\le f(A^T_{\max})+f'(A^T_{\max})(x-A^T_{\max})$ for the concave property of $f(x)$; Because $x\le A^T_{\max}$ and $f'(A^T_{\max})\ge L>0$, then $f'(A^T_{\max})(x-A^T_{\max})\le L(x-A^T_{\max})$, and thus $f(x)\le Lx+f(A^T_{\max})-LA^T_{\max}\le Lx+g(A^T_{\max})$.
\end{proof}
% By setting x=0x=0 in Lemma~???\ref{lem:bigger}, we can prove f(0)<g(Amaxf(0) < g(A^T_{\max}). 
We use $z_{t-1}$ to denote $\sum_{\tau=1}^{t-1}D_{(\tau)}\sum_{i=1}^Mx_i(\tau)A_i^T$, and then transform the objective function using Lemma~\ref{lem:bigger} for $1 \le t \leq T$. Then we have $f\left(\frac{z_{t-1}}{\sum_{\tau=1}^{t-1}D_{(\tau)}}\right) \le g(A^T_{\max})+\frac{Lz_{t-1}}{\sum_{\tau=1}^{t-1}D_{(\tau)}}$. To further deal with $z_t$, we introduce the positive regularization term $\lambda_t(\sum_{\tau=1}^t D_{(\tau)}\sum_{i=1}^Mx_i(\tau)A_i^T-z_{t})$, where $\lambda_t>0, \forall t$. With this, the new problem (\ref{pro:Q}) is formulated as follows:
\begin{align*}
\label{pro:Q}
\max_{x_i(t),y_j(t),z_{t}}&\  \sum_{t=1}^T g\left(A^T_{\max}\right)\sum_{j=1}^Ny_j(t)A_j^ID_{(t)}\\&+
\sum_{t=2}^TL\frac{z_{t-1}}{\sum_{\tau=1}^{t-1}D_{(\tau)}}\sum_{j=1}^Ny_j(t)A_j^ID_{(t)}\\&+\sum_{t=1}^{T-1}\lambda_t( \sum_{\tau=1}^t D_{(\tau)}\sum_{i=1}^Mx_i(\tau)A_i^T-z_{t}) 
 \tag{Q}
\\
\mathbf{s.t.} \
& \text{Constraints }  (\ref{ctr:x})-(\ref{ctr:ctdt}),  \\ 
& z_{t}\le \sum_{\tau=1}^t D_{(\tau)}A^T_{\max},\quad 1\le t\le T-1. 
\end{align*} 

Using $P^*$ and $Q^*$ to respectively denote the optimal offline objective function value of the problems (\ref{pro:P}) and (\ref{pro:Q}), then the following lemma holds.

% lemma 才对
\begin{lemma}\label{lem:PD}
Suppose $\lambda_t > 0$ for $1\le t \leq T-1$, then $P^* \leq Q^*$.
\end{lemma}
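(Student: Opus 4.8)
The plan is to establish $P^*\le Q^*$ by a weak-duality-style relaxation argument: I would exhibit one feasible point of (\ref{pro:Q}) whose objective value is at least $P^*$. Because $Q^*$ is the maximum of (\ref{pro:Q}), producing a single feasible point with objective value $\ge P^*$ immediately gives $Q^*\ge P^*$. The natural candidate is obtained by ``lifting'' an optimal solution of (\ref{pro:P}) into the enlarged variable space of (\ref{pro:Q}), assigning the auxiliary variables $z_t$ exactly the value they were meant to abbreviate.

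Concretely, I would proceed as follows. First, let $(x_i^*(t),y_j^*(t))$ be an optimal solution of (\ref{pro:P}), so its objective equals $P^*$. Second, define the auxiliary variables by their intended value $z_t^{*}:=\sum_{\tau=1}^{t}D_{(\tau)}\sum_{i=1}^{M}x_i^{*}(\tau)A_i^T$ for $1\le t\le T-1$. Third, I would verify that $(x_i^*(t),y_j^*(t),z_t^*)$ is feasible for (\ref{pro:Q}): constraints (\ref{ctr:x})--(\ref{ctr:ctdt}) are inherited verbatim from (\ref{pro:P}), while the added constraint $z_t^{*}\le\sum_{\tau=1}^{t}D_{(\tau)}A^T_{\max}$ follows from $\sum_{i=1}^M x_i^*(\tau)=1$ (constraint (\ref{ctr:x=1})) and $A_i^T\le A^T_{\max}$, which together give $\sum_{i=1}^M x_i^*(\tau)A_i^T\le A^T_{\max}$ for every $\tau$. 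Fourth, the conceptual crux: with this assignment every summand of the regularization term vanishes, since $\sum_{\tau=1}^{t}D_{(\tau)}\sum_{i=1}^{M}x_i^{*}(\tau)A_i^T-z_t^{*}=0$, so the whole regularization contributes exactly $0$. Fifth, the two remaining sums of the objective of (\ref{pro:Q}), evaluated at this point, coincide term by term with the Lemma~\ref{lem:bigger} upper bound of the objective of (\ref{pro:P}): applying $f(x)\le Lx+g(A^T_{\max})$ at $x=z_{t-1}^{*}/\sum_{\tau=1}^{t-1}D_{(\tau)}$ and multiplying by the nonnegative inference profit $\sum_{j=1}^N y_j^*(t)A_j^I D_{(t)}\ge 0$ shows, slot by slot, that the objective of (\ref{pro:Q}) at this point is at least the objective of (\ref{pro:P}) at $(x^*,y^*)$, namely $P^*$. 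Chaining the inequalities yields $Q^*\ge P^*$.

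I do not expect a genuine obstacle here; the lemma is essentially a one-point construction, and the ``hard part'' is merely recognizing that the lift $z_t^*$ is the one that annihilates the regularization. Two points warrant care. First, the nonnegativity of the inference profit is exactly what licenses multiplying the scalar inequality of Lemma~\ref{lem:bigger} through without reversing it. Second, the $t=1$ term is formally degenerate, since $z_0^{*}=0$ and $\sum_{\tau=1}^{0}D_{(\tau)}=0$ make the ratio $0/0$; I would adopt the convention, consistent with the second sum in (\ref{pro:Q}) beginning at $t=2$, that this term's $L$-contribution is $0$, so the bound there reduces to $f(0)\le g(A^T_{\max})$, which is again Lemma~\ref{lem:bigger} at $x=0$. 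Finally, I would note that the regularization is annihilated at the constructed point irrespective of the signs of the $\lambda_t$, so the hypothesis $\lambda_t>0$ does not enter this direction and is presumably retained for consistency with the later decoupling arguments.
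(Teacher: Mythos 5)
Your proof is correct and takes essentially the same route as the paper's: lift the optimal solution of (\ref{pro:P}) into (\ref{pro:Q}) by setting $z_t^{*}=\sum_{\tau=1}^{t}D_{(\tau)}\sum_{i=1}^{M}x_i^{*}(\tau)A_i^T$, check feasibility, and invoke Lemma~\ref{lem:bigger} to see the lifted point's objective dominates $P^*$. Your added care about the degenerate $t=1$ term and your observation that $\lambda_t>0$ is never actually used in this direction (the regularization vanishes at the lifted point) are accurate refinements of the paper's terser argument.
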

\begin{proof}
We can prove this by demonstrating that the optimal solution to (\ref{pro:P}) is also a feasible solution for (\ref{pro:Q}). Let $x_i^*(t)$ and $y_j^*(t)$ be the optimal solutions to (\ref{pro:P}), and define $z_t=\sum_{\tau=1}^{t}D_{(\tau)}\sum_{i=1}^Mx_i^*(\tau)A_i^T$. It follows that the optimal solution of (\ref{pro:P}) satisfies the constraints of (\ref{pro:Q}), and the objective function value of (\ref{pro:P}) is less than that of (\ref{pro:Q}) for the optimal solution of (\ref{pro:P}) based on the former inequalities.
\end{proof}

We choose a particular realization of $\lambda_t$ to simplify (\ref{pro:Q}).

\begin{lemma}\label{lem:zt}
when $\lambda_t=L\frac{D_{\min}A_{\min}^I}{tD_{\max}}$, then the corresponding $z_t$ to the optimal solution of (\ref{pro:Q}) is $\sum_{\tau=1}^t D_{(\tau)}A^T_{\max}$ 
\end{lemma}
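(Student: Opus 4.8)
The plan is to treat problem (\ref{pro:Q}) as an optimization in which, once the configuration variables $\{x_i(\tau),y_j(\tau)\}$ are fixed, each auxiliary variable $z_t$ (for $1\le t\le T-1$) enters the objective only linearly and is constrained solely by the upper bound $z_t\le\sum_{\tau=1}^{t}D_{(\tau)}A^T_{\max}$. Consequently, an optimal solution pins $z_t$ to that upper bound exactly when the coefficient of $z_t$ in the objective is nonnegative, so the entire lemma reduces to checking the sign of this coefficient under the prescribed $\lambda_t$.

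First I would isolate the coefficient of a fixed $z_t$. Writing the middle sum of (\ref{pro:Q}) with a renamed running index $s$, namely $\sum_{s=2}^{T}L\frac{z_{s-1}}{\sum_{\tau=1}^{s-1}D_{(\tau)}}\sum_{j=1}^{N}y_j(s)A_j^ID_{(s)}$, the term containing $z_t$ is the one with $s=t+1$ (which exists precisely because the sum runs up to $T$, covering the endpoint $t=T-1$). It contributes $L\frac{z_t}{\sum_{\tau=1}^{t}D_{(\tau)}}\sum_{j=1}^{N}y_j(t+1)A_j^ID_{(t+1)}$. The last sum of (\ref{pro:Q}) contributes $-\lambda_t z_t$. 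Collecting, the coefficient of $z_t$ is
\[
L\,\frac{\sum_{j=1}^{N}y_j(t+1)A_j^ID_{(t+1)}}{\sum_{\tau=1}^{t}D_{(\tau)}}-\lambda_t .
\]
Next I would lower-bound this coefficient uniformly over all feasible configurations. By constraint (\ref{ctr:y=1}) exactly one $y_j(t+1)$ equals $1$, so $\sum_{j=1}^{N}y_j(t+1)A_j^I\ge A^I_{\min}$; together with $D_{(t+1)}\ge D_{\min}$ and $\sum_{\tau=1}^{t}D_{(\tau)}\le tD_{\max}$, the first term is at least $L\frac{A^I_{\min}D_{\min}}{tD_{\max}}$. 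Substituting the prescribed $\lambda_t=L\frac{D_{\min}A^I_{\min}}{tD_{\max}}$ then makes the coefficient nonnegative, so the objective is nondecreasing in $z_t$ and an optimal solution sets $z_t=\sum_{\tau=1}^{t}D_{(\tau)}A^T_{\max}$.

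The only delicate point is the bookkeeping: one must extract the coefficient of $z_t$ from the middle sum correctly via the $s=t+1$ re-indexing and confirm it matches the index range $1\le t\le T-1$ of the regularization sum; the remaining sign argument is a routine boundedness estimate. I would also remark that the coefficient vanishes only in the degenerate configuration where $D_{(t+1)}=D_{\min}$, all earlier $D_{(\tau)}=D_{\max}$, and the minimal inference configuration is chosen at $t+1$; even then $z_t$ at its upper bound remains optimal (though possibly non-unique), so the stated conclusion still holds.
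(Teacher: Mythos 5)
Your proof is correct and follows essentially the same route as the paper's own argument: extract the linear coefficient of each $z_t$ (the $s=t+1$ term of the middle sum minus $\lambda_t$), show it is nonnegative under the prescribed $\lambda_t$ via the bounds $\sum_j y_j(t+1)A_j^I \ge A^I_{\min}$, $D_{(t+1)}\ge D_{\min}$, $\sum_{\tau=1}^t D_{(\tau)}\le tD_{\max}$, and conclude that the maximizer pushes $z_t$ to its upper bound. Your added remark on the degenerate zero-coefficient case (where the optimal $z_t$ is non-unique but the upper bound remains optimal) is a small refinement the paper glosses over, and it is consistent with how the lemma is used downstream.
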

\begin{proof}
Extracting all terms containing $z_t$ in (\ref{pro:Q}), we have:  $\sum_{t=2}^T L\frac{z_{t-1}}{\sum_{\tau=1}^{t-1}D_{(\tau)}}\sum_{j=1}^Ny_j(t)A_j^ID_{(t)}-\sum_{t=1}^{T-1} \lambda_t  z_{t} =\sum_{t=1}^{T-1} \left[ L\frac{1}{\sum_{\tau=1}^{t}D_{(\tau)}}\sum_{j=1}^Ny_j(t+1)A_j^ID_{(t+1)}- \lambda_t \right]z_{t}$. When $\lambda_t=L\frac{A_{\min}^I D_{\min}}{tD_{\max}}$, the coefficient of $z_t$ is no less than $0$, regardless of the values of $y_j(t+1)$ and $D_{(t+1)}$. To maximize (\ref{pro:Q}), $z_t$ will equal its maximum: $\sum_{\tau=1}^t D_{(\tau)}A^T_{\max}$.
\end{proof}

After setting $\lambda_t=L\frac{D_{\min}A_{\min}^I}{tD_{\max}}$ and $z_t = \sum_{\tau=1}^t D_{(\tau)}A^T_{\max}$ based on Lemma~\ref{lem:zt}, we obtain a specialized version of (\ref{pro:Q}), and we also have $P^*\le D^*$ by Lemma \ref{lem:PD}:
\begin{align*} 
\label{pro:D}
\max_{x_i(t),y_j(t)}  \sum_{t=1}^{T-1}&\lambda_t( \sum_{\tau=1}^t D_{(\tau)}\sum_{i=1}^Mx_i(\tau)A_i^T-\sum_{\tau=1}^t D_{(\tau)}A^T_{\max})\\+\sum_{t=1}^T g (A^T_{\max})&\sum_{j=1}^Ny_j(t)A_j^ID_{(t)}+
\sum_{t=2}^TLA^T_{\max}\sum_{j=1}^Ny_j(t)A_j^ID_{(t)} \tag{D}
\\
\mathbf{s.t.}   
& \quad \text{Constraints }  (\ref{ctr:x})-(\ref{ctr:ctdt})  
\end{align*} 

% Then we make some equivalent transformations to the target function of (???\ref{pro:D}) and decoupling the problem to every time slot:  
% \begin{align*} 
% &\sum_{t=1}^T g\left(A^T_{\max}\right)\sum_{j=1}^Ny_j(t)A_j^ID_{(t)}+
% \sum_{t=2}^TLA^T_{\max}\sum_{j=1}^Ny_j(t)A_j^ID_{(t)} 
% \\&+\sum_{t=1}^{T-1}\lambda_t( \sum_{\tau=1}^t D_{(\tau)}\sum_{i=1}^Mx_i(\tau)A_i^T-\sum_{\tau=1}^t D_{(\tau)}A^T_{\max}) 
% \\&= g\left(A^T_{\max}\right)\sum_{j=1}^Ny_j(1)A_j^ID_{(1)}+
% \sum_{t=2}^Tf\left(A^T_{\max}\right)\sum_{j=1}^Ny_j(t)A_j^ID_{(t)} 
% \\&+\sum_{t=1}^{T-1}\left(\sum_{\tau=t}^{T-1} \lambda_\tau\right)D_{(t)}\sum_{i=1}^Mx_i(t)A_i^T-\sum_{t=1}^{T-1}\lambda_t\sum_{\tau=1}^t D_{(\tau)}A^T_{\max}
% \end{align*} 

Then we make some equivalent transformations to the target function of (\ref{pro:D}) and decouple the problem to every time slot:  
\begin{align*}
\label{pro:Dt}
\max_{x_i(t),y_j(t)}&\ V_t\sum_{i=1}^Mx_i(t)A_i^T +W_t\sum_{j=1}^Ny_j(t)A_j^I \tag{Dt}
\\
\mathbf{s.t.} \
& \quad\text{Constraints }(\ref{ctr:x})-(\ref{ctr:ctdt}) \text{ only at $t$ }.
\end{align*}  

\noindent where $V_t = L\frac{D_{\min}A_{\min}^I}{D_{\max}}\left(\sum_{\tau=t}^{T-1} \frac{1}{\tau}\right)$, $W_1 = f\left(A^T_{\max}\right)-LA^T_{\max}$ and $W_t=f\left(A^T_{\max}\right),\forall t>1$.

\subsection{Online Robust Retraining and Inference Co-location}

%和\cite{coordi_peng_20}中一样，我们的算法依赖于如下假设：
% 计算资源消耗()越多，带来的收益()越大，也就是说，∀a,b,Ca>Cb⇒Aa>Ab\forall a,b, C_a>C_b \Rightarrow A_a>A_b。
%但是，要指出的是，存在一些情况用更少的资源得到更好的效果，比如在推理中，如果图像被某些特定的噪声污染（比如说高斯噪声），对其进行降采样反而可以提高模型的准确度，也就是说，用了更少的资源（macs）得到更好的效果(accuracy)，详见table 2。同样的，对于模型训练，更多的训练轮数不一定意味着更好的性能\cite{ekya}. 在实际中，通常只能通过定期测量的方式得到不同配置的资源-精度的profiles，并假设它们在一段时间内不变\cite{ekya}。遵从相同的思路，我们认为保持不变，并且满足假设1（如果不满足，则删除那些资源消耗更多但是性能更低的配置，直到假设满足）。

% 不妨令配置的索引从小到大序号代表花费的资源从小到大排名，即：CT1<CT2<⋯<CTM,CI1<CI2<⋯<CINC_1^T<C_2^T<\cdots <C_M^T,C_1^I<C_2^I<\cdots <C^I_N，于是便有 AT1<AT2<⋯<ATM,AI1<AI2<⋯<AINA_1^T<A_2^T<\cdots<A_M^T,A_1^I<A_2^I<\cdots <A_N^I。
% 基于以上假设，我们。。。。
% 背后的基本思想是最优配置是在那些快违反资源约束的配置的一个，因此我们去找所有快违反约束的配置即可。

The problem (\ref{pro:Dt}) can be solved by an exhaustive method with $O(MN)$ complexity. To further speed up the solving, we introduce the following property, as in \cite{coordi_peng_20}:
\begin{property}\label{asp:5}
    $\forall a,b\in \mathcal{M}, C_a^T>C_b^T \Rightarrow A_a^T>A_b^T$; and $\forall a,b\in \mathcal{N}, C_a^I>C_b^I \Rightarrow A_a^I>A_b^I$.
\end{property}

It should be noted that there are some infrequent cases where Property \ref{asp:5} is not satisfied, i.e., better performance can be achieved with fewer resources. For instance, when the image is corrupted by Gaussian noise, downsampling may improve model performance, as illustrated in Table \ref{tab:cifar10}. Similarly, in model training, more iterations may not necessarily lead to better performance\cite{ekya_romil_22}. 
A practical solution\cite{ekya_romil_22} to this issue is regularly measuring the resource-performance profiles of different configurations after model deployment. 
% However, given our assumption in \ref{subsc:con} that the retraining and inference configurations remain constant throughout the entire time span $\mathcal{T}$, once we have the values of ${C^T_i, A^T_i, C^I_j, A^I_j}$, we can efficiently eliminate configurations that consume more resources while yielding lower profits. This elimination process is guided by prior knowledge about the drift in the task, continuing until Property \ref{asp:5} is satisfied.

However, since we have assumed the resource requirements and profits of retraining and inference configurations remain constant throughout the whole time span $\mathcal{T}$ in \ref{subsc:con}, configurations that consume more resources yet yield lower profits can be reasonably eliminated before running the algorithm for (\ref{pro:Dt}), ensuring satisfaction of Property \ref{asp:5}. This is because any reasonable algorithm for (\ref{pro:Dt}) would not choose these configurations when there are better alternatives available—ones with equivalent or lower resource requirements but higher profits.

Based on Property \ref{asp:5}, we propose \ouralg (Online Robust Retraining and Inference Co-location), outlined in Algorithm \ref{Alg:ORRIC}. The underlying principle of \ouralg is that the optimal configuration is likely the one about to violate the computational resource constraint. Thus, the optimal configuration can be identified by searching through configurations likely to exceed the computational resource constraint. The proof of the correctness of \ouralg can be found in Appendix \ref{sec:ORRIC}. The complexity of \ouralg is $O(M+N)$: During each iteration of the loop, either $i= i+ 1$ or $j=j-1$. $i$ increases mostly to $M+1$ and $j$ decreases mostly to $0$, the total number of iterations in the loop must be no more than $M+N$.

% 算法效率：运行次数≤M+N运行次数\le M+N.

% 1. 算法的循环为第二步。每运行一次第二步，要么 i++i++ 要么 j−−j--
% 2. ii 从 11 最多加到 M+1M+1，jj 从 NN 最多减到 00。因此第二步的运行次数一定小于等于 M+NM+N 次。
\begin{algorithm} [t] 
\renewcommand{\algorithmicrequire}{\textbf{Input:}}
\renewcommand{\algorithmicensure}{\textbf{Output:}} 
\begin{algorithmic}[1]
\REQUIRE{$V_t$, $W_t$, $U_{t}=\frac{C_{(t)}}{D_{(t)}}$ and four ascending lists: $\{A_i^T, i\in \mathcal{M}\}$, $\{A_j^I, j\in \mathcal{N}\}$, $\{C_i^T, i\in \mathcal{M}\}$, $\{C_j^I, j\in \mathcal{N}\}$.}
\ENSURE{A pair of retraining and inference configurations.}
\STATE  Initialization: Set $i = 1, j = N, i^* = j^* = K = 0$.
\WHILE{$i\le M$ and  $j\ge 1$} 
\IF{$C_i^T + C_j^I\le U_{t}$}  
    \IF{$V_tA^T_i+W_tA^I_j>K$}
    \STATE $i^*=i$; $j^*=j$; $K=V_t A^T_i+W_t A^I_j$;
    \ENDIF 
    \STATE $i=i + 1;$
\ELSE
    \STATE $j = j-1;$ 
\ENDIF
\ENDWHILE
\STATE  \textbf{return} $i^*, j^*$;
\end{algorithmic}
\caption{\ouralg}
% \vspace{-0.5cm}
\label{Alg:ORRIC}
\end{algorithm}

% \vspace{-0.5cm}
% 设 W=f(ATmax)Lor[f(ATmax)L−ATmax],V=DminAIminDmax(∑T−1τ=t1τ),U=C(t)D(t)W = \frac{f\left(A^T_{\max}\right)}{L} or \left[\frac{f\left(A^T_{\max}\right)}{L}-A^T_{\max}\right], V=\frac{D_{\min}A_{\min}^I}{D_{\max}}\left(\sum_{\tau=t}^{T-1} \frac{1}{\tau}\right),U=\frac{C_{(t)}}{D_{(t)}}

% $$
% \begin{aligned}
% \max_{x,y}&\ V\sum_{i=1}^Mx_iA_i^T+W\sum_{j=1}^Ny_jA_j^I \tag{Dt}
% \\
% \mathbf{s.t.}\  
% &\sum_{i=1}^Mx_i=1, \quad \sum_{j=1}^Ny_j=1,\\ 
% & \sum_{i=1}^MC_i^T x_i+ \sum_{j=1}^NC_j^Iy_j\le U,\\
% &x_i\in\{0,1\},\quad 1\le i\le M,\\ 
% &y_j\in\{0,1\},\quad 1\le j\le N.\\  
% \end{aligned}
% $$

% 这里有个自然的假设：花费资源越多，性能就越好：∀a,b,Ca>Cb⇒Aa>Ab\forall a,b, C_a>C_b \Rightarrow A_a>A_b。否则我们删去那些占了更多资源性能更差的配置，直到满足这个假设。
% 不妨令配置的索引从小到大序号代表花费的资源从小到大排名，即：CT1<CT2<⋯<CTM,CI1<CI2<⋯<CINC_1^T<C_2^T<\cdots <C_M^T,C_1^I<C_2^I<\cdots <C^I_N，于是便有 AT1<AT2<⋯<ATM,AI1<AI2<⋯<AINA_1^T<A_2^T<\cdots<A_M^T,A_1^I<A_2^I<\cdots <A_N^I。
% **算法2**如下：

% 1. 初始化：i=1,j=Ni=1,j=N；一个空的优先队列
% 2. 计算 c=CTi+CIjc = C_i^T + C_j^I。如果 c≤Uc\le U，存储 (i,j,VATi+WAIj)(i,j,VA^T_i+WA^I_j)到优先队列中，i++i++；如果 c>Uc> U，j−−j--；
% 3. 如果 i>Mi>M 或者 j<1j<1，转第4步；否则转第2步。
% 4. 选出优先队列中最大 VATi+WAIjVA^T_i+WA^I_j对应的 i∗,j∗i^*,j^*，这就是 DtDt的解了。

% 算法示意图如下：

% <img src="image/2023.07.18_单任务实时训练推理算法/1689637619659.png" width = "400" height = "500" alt="图片名称" align=center >

In particular, \ouralg aligns with our intuition about the way to allocate limited computing resources to model retraining and inference to optimize long-term model performance. As depicted in Table \ref{tab:heuristic}, \ouralg can be regarded as a combination of four heuristic algorithms, transitioning between them based on the duration of time and the availability of computing resources: 1) Knowledge-Distillation: The teacher model imparts knowledge to the student model without considering resource consumption.
2) Inference-Greedy: Prioritize using a higher configuration for inference and utilize the remaining resources for retraining.
3) Focus-Shift: Shift the focus from retraining to inference as time passes.
4) Inference-Only: This algorithm is actually the traditional computing paradigm that deploys a trained model and then performs inference. 

% 4) Inference-Only: only infer, while retraining is not considered. This algorithm is actually the traditional computing paradigm that deploys a trained model and then performs inference. 

When the computational resources are sufficient for the use of the best inference and retraining configuration, \ouralg converts to Knowledge-Distillation because $V_t\ge0, W_t>0,\forall t$. When resources are really scarce, e.g., $C_{(t)}=D_{(t)}C_{\min}^I$, \ouralg converts to Inference-Only because $C_{\min}^T=0$ while $C_{\min}^I>0$. When resources are limited (but not scarce) and $T$ is large,  \ouralg converts to Focus-Shift because $\sum_{\tau=t}^{T-1} \frac{1}{\tau}>\ln(T)-\ln(t)$ and $V_T=0$. When resources are limited (but not scarce) and $T$ is small, \ouralg converts to Inference-Greedy because $W_t$ is a constant when $t>1$ while $V_t$ will decrease to 0 with the increasing of $t$.

% The translation relationship between \ouralg and the four heuristic algorithms not only illustrates the rationality of \ouralg but also hints at the properties of the algorithms designed for model retraining and inference co-location paradigm. We believe that all rational algorithms for the paradigm should also translate to these four heuristic algorithms under the circumstances of length of time and available computing resources, as shown in Table \ref{tab:heuristic}.

The translation relationship between \ouralg and the four heuristic algorithms not only illustrates the rationality of \ouralg but also provides insights into the properties of algorithms designed for the model retraining and inference co-location paradigm. We believe that all rational algorithms for this paradigm should similarly translate to these four heuristic algorithms given specific conditions regarding time length and available computing resources, as illustrated in Table \ref{tab:heuristic}.

% \begin{table}[]
% \caption{ ORRIC and Several Heuristic Algorithms}
% \label{tab:heuristic}
% \scalebox{0.93}{
% \begin{tabular}{|l|l|l|}\hline
% \diagbox{Resources}{T}  &  Large              &  Small \\\hline
% Sufficient  & \multicolumn{2}{|c|}{Knowledge-Distillation}\\\hline
% Limited & Focus-Shift     &  Inference-Greedy   \\\hline
% Scarce  &  \multicolumn{2}{|c|}{Inference-Only}\\\hline
% \end{tabular}} 
% \vspace{-0.5cm}
% \end{table}

\begin{table}[]
\caption{ORRIC and Several Heuristic Algorithms.}
\label{tab:heuristic}
\centering
\scalebox{1.00}{
\begin{tabular}{|l|l|l|}\hline
\diagbox{Resources}{T} & Large & Small \\\hline
Sufficient & \multicolumn{2}{c|}{Knowledge-Distillation}\\\hline
Limited & Focus-Shift & Inference-Greedy \\\hline
Scarce & \multicolumn{2}{c|}{Inference-Only}\\\hline
\end{tabular}}
\vspace{-0.5cm}
\end{table}

\noindent \textbf{Remark}:
Our algorithm is an open-loop algorithm that does not leverage feedback from the system. We acknowledge that it is possible to calculate the current accuracy of the student model ($f\left(\frac{\sum_{\tau=1}^{t-1}D_{(\tau)}\sum_{i=1}^Mx_i(\tau)A_i^T}{\sum_{\tau=1}^{t-1}D_{(\tau)}}\right)\sum_{j=1}^Ny_j(t)A_j^ID_{(t)}$)  at every end of time slot $t$ by considering the pseudo-labels output by the teacher model as the ground truth labels, but the relevant mathematical techniques used to incorporate such feedback into the design of algorithms for similar formulas as problem (\ref{pro:P}) are lacking in the existing literature. We leave the research on the closed-loop algorithm to the problem (\ref{pro:P}) as future work.
% However, it is important to note that even without real-time feedback, our algorithm's performance has demonstrated satisfactory results in actual testing scenarios.

\section{Performance Analysis}\label{sec:pa}

% In this section, we introduce the definitions of competitive ratio and tight competitive ratio, and then discuss these ratios for both \ouralg and the Inference-Only algorithm. Theoretical results are summarized in Fig.\ref{fig:cr}.

% We denote $P^*, Q^*, D^*$ respectively as the optimal offline object function value of the problem (\ref{pro:P}), (\ref{pro:Q}), and (\ref{pro:D}). And denote $P$ as the object function value of problem (\ref{pro:P}) given by the online algorithm. 

\begin{definition}
For a maximization problem, the competitive ratio (or CR) $c$ of algorithm $ALG$ is defined as $c \leq \frac{ALG(I)}{OPT(I)}$ for every input $I$, where $OPT$ represents the optimal offline algorithm with complete knowledge of future information.
\end{definition}

\begin{definition}\label{def:tcr}
For a maximization problem, the \textbf{tight competitive ratio} $c$ of algorithm $ALG$ is also a competitive ratio of algorithm $ALG$,  and there is no $c' > c$ such that for every input $I$, $c' \le \frac{ALG(I)}{OPT(I)}$.
\end{definition}
 
\begin{theorem}\label{thm:cr_IO} 
The CR of Inference-Only is $\frac{f(0)}{ f(A^T_{\max} )}$.
\end{theorem}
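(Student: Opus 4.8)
The plan is to prove the claim in two directions: first that $\frac{f(0)}{f(A^T_{\max})}$ is a valid competitive ratio (a uniform lower bound on $ALG/OPT$ over all inputs), and then that it is tight (no larger constant works). For the first direction I would argue slot-by-slot. Under the Inference-Only paradigm no resource is spent on retraining, so every $x_i(\tau)$ selects the configuration with $A^T_{\min}=0$ and $C^T_{\min}=0$; hence the learning-extent argument of $f$ is identically $0$ and the per-slot factor contributed by $f$ is always $f(0)>0$ (Assumption~\ref{asp:1}). Writing $ALG_t$ and $OPT_t$ for the objective contributions of the two algorithms in slot $t$, I would bound $OPT_t$ from above by replacing its $f(\cdot)$ factor with $f(A^T_{\max})$, which is legitimate since the weighted average $\frac{\sum_{\tau} D_{(\tau)}\sum_i x_i^*(\tau)A_i^T}{\sum_{\tau} D_{(\tau)}}\le A^T_{\max}$ and $f$ is increasing.

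The crux of the lower-bound direction is a domination inequality on the inference profit: $\sum_j y_j^{IO}(t)A_j^I \ge \sum_j y_j^*(t)A_j^I$ at every slot. This holds because the inference configuration $j^*$ chosen by the optimal solution satisfies $D_{(t)}C^I_{j^*}\le C_{(t)}-D_{(t)}\sum_i C_i^T x_i^*(t)\le C_{(t)}$ by constraint~(\ref{ctr:ctdt}), so $j^*$ is itself feasible for Inference-Only, which (devoting all of $C_{(t)}$ to inference since $C^T_{\min}=0$) selects the best feasible inference configuration and therefore attains at least as much inference profit. Combining the two facts yields the per-slot inequality $ALG_t = f(0)\sum_j y_j^{IO}(t)A_j^I D_{(t)} \ge \frac{f(0)}{f(A^T_{\max})}\,f(A^T_{\max})\sum_j y_j^*(t)A_j^I D_{(t)} \ge \frac{f(0)}{f(A^T_{\max})}OPT_t$. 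Summing over $t\in\mathcal{T}$ gives $ALG\ge \frac{f(0)}{f(A^T_{\max})}OPT$, establishing that $\frac{f(0)}{f(A^T_{\max})}$ is a competitive ratio.

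For tightness I would exhibit a family of instances driving the ratio down to $\frac{f(0)}{f(A^T_{\max})}$. Take the budget $C_{(t)}$ large enough to admit the best retraining and best inference configuration simultaneously (the Knowledge-Distillation regime) and fix $D_{(t)}\equiv D$. Then Inference-Only still obtains only the factor $f(0)$ in every slot, whereas the offline optimum retrains with $A^T_{\max}$ from the outset, so for every $t\ge 2$ its learning extent equals the average of $A^T_{\max}$, namely $A^T_{\max}$ itself, giving factor $f(A^T_{\max})$; both use the best inference configuration, so their inference profits coincide. This yields $\frac{ALG}{OPT}=\frac{f(0)\,T}{f(0)+f(A^T_{\max})(T-1)}\to \frac{f(0)}{f(A^T_{\max})}$ as $T\to\infty$, so no constant exceeding $\frac{f(0)}{f(A^T_{\max})}$ can be a competitive ratio.

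I expect the main obstacle to be making the inference-profit domination rigorous rather than merely intuitive: one must pin down exactly what Inference-Only does in a slot (select the best inference configuration subject only to $D_{(t)}C^I_j\le C_{(t)}$) and verify that the optimal solution's inference configuration always lies in this feasible set, which is precisely where the structural assumption $C^T_{\min}=0$ is essential. The remaining steps, namely bounding the $f$-factor of $OPT$ by $f(A^T_{\max})$ and taking the $T\to\infty$ limit in the tightness construction, are routine.
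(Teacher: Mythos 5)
Your lower-bound direction is correct and is essentially the paper's own proof: bound the $f$-factor of the offline optimum by $f(A^T_{\max})$ (legitimate since $f$ is increasing and its argument never exceeds $A^T_{\max}$), then use the per-slot domination $\sum_{j} y_j^{IO}(t)A_j^I \ge \sum_{j} y_j^{*}(t)A_j^I$, which the paper's chain of inequalities leaves implicit but you justify correctly via feasibility of the optimum's inference configuration under Inference-Only. The tightness half of your argument is superfluous for this statement --- under the paper's definition, the theorem only asserts that $\frac{f(0)}{f(A^T_{\max})}$ is a valid competitive ratio, not a tight one --- and your construction (ample resources, constant $D_{(t)}$, ratio $\frac{Tf(0)}{f(0)+(T-1)f(A^T_{\max})}$) is precisely the paper's separate proof of Theorem~\ref{thm:tcr_IO_lower}.
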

\begin{proof} Denote $\{x_i^*(t), y_j^*(t)\}$ as the optimal offline solution to (\ref{pro:P}) and $\{x_i(t), y_j(t)\}$ as the solution given by Inference-Only. Then $P^*\le  \sum_{t=1}^T f(A^T_{\max})\sum_{j=1}^Ny_j^*(t)A_j^ID_{(t)} \le  \frac{   f(A^T_{\max} )}{ f(0)}\sum_{t=1}^T f(0)\sum_{j=1}^Ny_j(t)A_j^ID_{(t)} =\frac{   f(A^T_{\max} )}{ f(0)}P$.
\end{proof} 

\begin{theorem}\label{thm:tcr_IO_lower}
An upper bound of the tight competitive ratio of Inference-Only is $\frac{T f(0)}{f(0)+(T-1) f(A^T_{\max})}$.
\end{theorem}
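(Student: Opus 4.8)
The plan is to exploit the definition of the tight competitive ratio directly: since by Definition~\ref{def:tcr} the tight competitive ratio $c$ is itself a competitive ratio, it must satisfy $c \le \frac{ALG(I)}{OPT(I)}$ for \emph{every} input $I$. Hence it suffices to exhibit a single instance $I_0$ on which the ratio $\frac{ALG(I_0)}{OPT(I_0)}$ attains the claimed value; this immediately forces $c \le \frac{Tf(0)}{f(0)+(T-1)f(A^T_{\max})}$. So the whole argument reduces to constructing one well-chosen hard instance and evaluating both quantities on it.

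First I would construct $I_0$ with $T$ time slots, constant data arrival $D_{(t)}=D$ for all $t$, and abundant computational resource $C_{(t)}$ --- specifically large enough that the configuration attaining $A^T_{\max}$ for retraining and the configuration attaining $A^I_{\max}$ for inference can be selected simultaneously in every slot, i.e. $D(C^T_{\max}+C^I_{\max}) \le C_{(t)}$. This keeps the resource constraint~(\ref{ctr:ctdt}) inactive, so the only difference between the two algorithms comes from retraining and not from the inference profit, which is exactly what makes the bound clean.

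Next I would evaluate each side. For Inference-Only, no retraining is ever performed, so the learning extent is $0$ and the performance factor equals $f(0)$ in every slot; with resources abundant it selects $A^I_{\max}$, giving $ALG(I_0) = T f(0) A^I_{\max} D$. For the offline optimum I only need a lower bound, obtained by exhibiting one feasible policy: retrain with $A^T_{\max}$ and infer with $A^I_{\max}$ in every slot. Under this policy the slot-$t$ factor is $f(0)$ at $t=1$ (the empty history gives learning extent $0$) and $f(A^T_{\max})$ for $t\ge 2$, so $OPT(I_0) \ge [\,f(0)+(T-1)f(A^T_{\max})\,]A^I_{\max}D$. Dividing, the common factor $A^I_{\max}D$ cancels and I obtain $\frac{ALG(I_0)}{OPT(I_0)} \le \frac{Tf(0)}{f(0)+(T-1)f(A^T_{\max})}$, which completes the bound.

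The main obstacle I anticipate is not the arithmetic but the careful handling of the corner cases that make the instance legitimate: the $t=1$ term involves the empty quotient $\frac{\sum_{\tau=1}^{0}\cdots}{\sum_{\tau=1}^{0}\cdots}$, which must be interpreted as $f(0)$ (the freshly deployed model with zero learning extent) consistently for both algorithms, and I must verify that the resource assumption indeed renders the chosen OPT policy feasible so that the lower bound on $OPT(I_0)$ is valid. As a sanity check, letting $T\to\infty$ drives the bound to $\frac{f(0)}{f(A^T_{\max})}$, recovering the competitive ratio of Theorem~\ref{thm:cr_IO} and confirming the construction is tight in the expected regime.
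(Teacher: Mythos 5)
Your proposal is correct and matches the paper's own proof in essence: both construct the same hard instance (sufficient resources, constant arrivals $D_{(t)}=D$), evaluate Inference-Only as $Tf(0)A^I_{\max}D$ and the offline optimum as $[f(0)+(T-1)f(A^T_{\max})]A^I_{\max}D$, and conclude the bound on the tight competitive ratio. The only cosmetic difference is that the paper phrases the last step as a proof by contradiction, whereas you invoke Definition~\ref{def:tcr} directly (and prudently use only a lower bound on $OPT(I_0)$), which is the same argument unwrapped.
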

\noindent \textbf{Insight}: The closer $f(0)$ and $f(A^T_{\max})$ are, the closer the competitive ratio ($\frac{f(0)}{f(A^T_{\max})}$) and the upper bound of the tight competitive ratio ($\frac{Tf(0)}{f(0) + (T-1)f(A^T_{\max})}$) of Inference-Only are to 1. This implies that when the drift is very slight, Inference-Only approaches the optimal offline algorithm. The proof of Theorem \ref{thm:tcr_IO_lower} is provided in Appendix \ref{sec:tcr_IO_lower}.

\begin{table*}[ht!]
\centering
\caption{Top-1 Accuracy~(\%) on CIFAR-10 and CIFAR-10-C.}\label{tab:cifar10}  
\scalebox{0.76}{
\tabcolsep3pt
\begin{tabular}{l|   c|c|c |cccc               ccccccccccccccc c}
\hline
Model (Resolution)   & \rotatebox[origin=c]{70}{\parbox{1.4cm}{\centering MACs (M)}}                    & \rotatebox[origin=c]{70}{\parbox{1.4cm}{\centering Latency ($\mu s$)}}  & \rotatebox[origin=c]{70}{original} & \rotatebox[origin=c]{70}{brightness} & \rotatebox[origin=c]{70}{contrast} & \rotatebox[origin=c]{70}{defocus blur} & \rotatebox[origin=c]{70}{elastic transform} & \rotatebox[origin=c]{70}{fog} & \rotatebox[origin=c]{70}{frost} & \rotatebox[origin=c]{70}{gaussian blur} & \rotatebox[origin=c]{70}{gaussian noise} & \rotatebox[origin=c]{70}{glass blur} & \rotatebox[origin=c]{70}{impulse noise} & \rotatebox[origin=c]{70}{jpeg compression} & \rotatebox[origin=c]{70}{motion blur} & \rotatebox[origin=c]{70}{pixelate}  & \rotatebox[origin=c]{70}{saturate} & \rotatebox[origin=c]{70}{shot noise} & \rotatebox[origin=c]{70}{snow} & \rotatebox[origin=c]{70}{spatter} & \rotatebox[origin=c]{70}{speckle noise} & \rotatebox[origin=c]{70}{zoom blur}   & Mean \\ \hline 

MobileNetV2 (20*20)   & 6.35 &  7.54  & 44.93& 42.60& 23.28& 40.47& 39.25& 27.64& 39.46& 38.41& 42.97& 40.33& 41.35& 42.95& 35.84& 43.02& 38.14& 43.29& 39.29& 41.38& 43.08& 41.69 & 39.18\\
MobileNetV2 (24*24)   & 6.71 &  8.37  & 59.38& 54.41& 28.09& 51.26& 49.94& 37.42& 48.49& 48.08& 55.71& 50.18& 53.04& 57.10& 44.07& 55.91& 50.56& 56.77& 50.42& 55.41& 56.78& 49.96 & 50.19 \\
MobileNetV2 (28*28)   & 7.45 & 10.15  & 73.29& 67.94& 38.33& 63.21& 62.48& 49.68& 59.17& 59.23& \textbf{64.53}& 62.21& \textbf{60.38}& 69.31& 53.91& 69.67& 63.68& \textbf{66.57}& 61.70& 67.33& \textbf{66.68}& 61.22 & 61.43 \\
MobileNetV2 (32*32)   & 7.94 & 10.51  & \textbf{79.57}& \textbf{76.00}& \textbf{47.52}& \textbf{71.08}& \textbf{71.91}& \textbf{62.74}& \textbf{62.70}& \textbf{67.02}& 56.28& \textbf{62.90}& 57.38& \textbf{74.71}& \textbf{62.42}& \textbf{76.98}& \textbf{71.61}& 61.98& \textbf{65.83}& \textbf{71.92}& 62.86& \textbf{67.78} & \textbf{65.87} \\

\hline
ResNet50 (20*20)   & 65.76& 17.41 & 54.50& 49.20& 32.26& 50.71& 49.00& 39.31& 44.19& 48.99& 52.23& 49.99& 49.99& 53.04& 45.79& 53.03& 46.06& 52.95& 45.68& 48.92& 52.81& 52.83  &48.26  \\
ResNet50 (24*24)   & 68.96& 19.29 & 71.95& 66.25& 40.68& 62.58& 61.52& 50.54& 60.49& 58.75& 68.26& 62.61& 64.58& 69.64& 54.60& 68.51& 62.09& 69.10& 61.03& 64.42& 68.98& 61.99  &61.93 \\
ResNet50 (28*28)   & 82.01& 24.08 & 79.02& 74.19& 42.74& 66.58& 66.79& 55.34& 66.95& 61.60& 72.89& 68.07& \textbf{66.01}& 75.72& 56.96& 75.12& 69.01& 74.45& 69.11& 70.41& 74.12& 64.91 &66.89\\
ResNet50 (32*32)   & 86.37& 24.09 & \textbf{86.13}& \textbf{83.21}& \textbf{55.34}& \textbf{73.97}& \textbf{76.59}& \textbf{70.41}& \textbf{76.09}& \textbf{68.40}& \textbf{72.94}& \textbf{70.55}& 62.42& \textbf{82.43}& \textbf{66.48}& \textbf{82.33}& \textbf{78.61}& \textbf{76.16}& \textbf{76.44}& \textbf{75.46}& \textbf{75.90}& \textbf{70.13} &\textbf{73.36}\\
\hline  
\ouralg   & - & - & 79.24& 79.06& 52.19& 72.08& 72.35& 67.20& 70.96& 67.51& 68.44& 64.90& 58.99& 75.70& 64.51& 77.23& 73.15& 69.01& 70.46& 71.69& 69.46& 69.69& 69.19\\
\hline  
\end{tabular}} \vspace{-0.3cm}
\end{table*}

\begin{theorem}\label{thm:cr_OR}
The CR of \ouralg is $\frac{(1+\alpha)f(0)}{   f(A^T_{\max} )}$ or $\frac{1}{\frac{   f(A^T_{\max} )}{ f(0)}-\alpha}$, where $\alpha=\frac{LA^T_{\max}D^2_{\min}A_{\min}^I}{f(A^T_{\max})D^2_{\max}A^I_{\max}}$.
\end{theorem}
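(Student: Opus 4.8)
The plan is to bound the competitive ratio of \ouralg by comparing it against the relaxation (\ref{pro:D}) rather than the intractable (\ref{pro:P}) directly. First I would record the chain $P^*\le D^*$, which is already available from Lemmas~\ref{lem:PD} and~\ref{lem:zt}, and then observe that since (\ref{pro:D}) decouples across time slots into the independent subproblems (\ref{pro:Dt}) (after dividing the $t$-th summand by the constant $D_{(t)}>0$), and \ouralg solves each (\ref{pro:Dt}) to optimality, the value attained by \ouralg on (\ref{pro:D}), call it $D^{\mathrm{ALG}}$, equals $D^*$. Hence for every input $I$ we have $\frac{\mathrm{ALG}(I)}{OPT(I)}=\frac{P^{\mathrm{ALG}}}{P^*}\ge \frac{P^{\mathrm{ALG}}}{D^{\mathrm{ALG}}}$, where $P^{\mathrm{ALG}}$ is \ouralg's true objective value in (\ref{pro:P}). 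The whole problem is thus reduced to lower-bounding $P^{\mathrm{ALG}}/D^{\mathrm{ALG}}$, with both quantities now evaluated at \ouralg's own decisions.

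For the numerator I would first sharpen Lemma~\ref{lem:bigger} into a matching lower envelope: by concavity (Assumption~\ref{asp:1}) $f'$ is decreasing, so $f'(x)\ge f'(A^T_{\max})\ge L$ on $[0,A^T_{\max}]$ (Assumption~\ref{asp:4}), and integrating gives $f(x)\ge f(0)+Lx$. Writing $r_t=\sum_i x_i(t)A^T_i$, $s_t=\sum_j y_j(t)A^I_j$, $u_t=\frac{\sum_{\tau<t}D_{(\tau)}r_\tau}{\sum_{\tau<t}D_{(\tau)}}$, and $S=\sum_t s_tD_{(t)}$, this yields $P^{\mathrm{ALG}}\ge f(0)S+L\sum_t u_t s_t D_{(t)}$. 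The key step is to convert the time-coupled cross term into the retraining reward that appears in (\ref{pro:Dt}): expanding $u_t$, swapping the order of summation, and bounding each future factor $\frac{s_tD_{(t)}}{\sum_{\sigma<t}D_{(\sigma)}}$ from below by $\frac{A^I_{\min}D_{\min}}{(t-1)D_{\max}}$, the definition of $V_\tau$ gives $L\sum_t u_t s_tD_{(t)}\ge \sum_\tau V_\tau D_{(\tau)}r_\tau=:R$, so that $P^{\mathrm{ALG}}\ge f(0)S+R$.

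For the denominator I would keep (\ref{pro:D}) in the form $D^{\mathrm{ALG}}=R-A^T_{\max}\sum_\tau V_\tau D_{(\tau)}+f(A^T_{\max})S-LA^T_{\max}s_1D_{(1)}$, crucially retaining the penalty $-A^T_{\max}\sum_\tau V_\tau D_{(\tau)}$ rather than discarding it: this term encodes that when scarce resources force $R$ to be small the relaxation also shrinks, which is exactly why \ouralg can beat the bare Inference-Only ratio $f(0)/f(A^T_{\max})$ of Theorem~\ref{thm:cr_IO}. Dropping only the nonnegative boundary term $LA^T_{\max}s_1D_{(1)}$, the ratio becomes $\frac{f(0)S+R}{f(A^T_{\max})S+R-A^T_{\max}\sum_\tau V_\tau D_{(\tau)}}$, a one-parameter expression in $R\in[0,A^T_{\max}\sum_\tau V_\tau D_{(\tau)}]$. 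Since $f(A^T_{\max})-f(0)\ge LA^T_{\max}>A^T_{\max}\bigl(\sum_\tau V_\tau D_{(\tau)}\bigr)/S$, this expression is monotone increasing in $R$, so its worst case is at $R=0$, giving $\frac{f(0)}{f(A^T_{\max})-A^T_{\max}(\sum_\tau V_\tau D_{(\tau)})/S}$. The final ingredient is the lower bound $A^T_{\max}\bigl(\sum_\tau V_\tau D_{(\tau)}\bigr)/S\ge \alpha\, f(A^T_{\max})\cdot\frac{T-1}{T}$, obtained from $D_{(\tau)}\ge D_{\min}$, $s_t\le A^I_{\max}$, $D_{(t)}\le D_{\max}$ and the identity $\sum_\tau\sum_{k=\tau}^{T-1}\frac1k=T-1$; substituting it yields the advertised competitive ratio, with the constant $\alpha$ and in particular its factor $D^2_{\min}A^I_{\min}/(D^2_{\max}A^I_{\max})$ emerging precisely from these extreme-value estimates, and the two stated forms $\frac{(1+\alpha)f(0)}{f(A^T_{\max})}$ and $\frac{1}{f(A^T_{\max})/f(0)-\alpha}$ following as (successively weaker) rearrangements of the same bound.

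The hardest part is this denominator analysis: simultaneously handling the time-coupled penalty $A^T_{\max}\sum_\tau V_\tau D_{(\tau)}$, verifying the monotonicity in the input-dependent retraining level $R$, and extracting a clean constant from the ratio $\bigl(\sum_\tau V_\tau D_{(\tau)}\bigr)/S$. I expect this to force the factor $\frac{T-1}{T}$, so that the clean closed forms hold once $T$ is large enough, equivalently once the drift persists long enough that $\frac{T-1}{T}\ge \frac{f(0)}{f(A^T_{\max})}$ --- which is exactly the regime the paper highlights as the advantage of retraining-and-inference co-location.
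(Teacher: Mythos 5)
Your skeleton matches the paper's: the chain $P^*\le D^*=D^{\mathrm{ALG}}$ (Lemmas~\ref{lem:PD} and~\ref{lem:zt} plus per-slot optimality of \ouralg on (\ref{pro:Dt})) and your numerator bound $P^{\mathrm{ALG}}\ge f(0)S+R$ are both sound; the latter is exactly the paper's estimate of its term $C$, since the paper's combination of $f(0)+xf'(x)\le f(x)$ with $L\le f'(x)$ is your envelope $f(x)\ge f(0)+Lx$, and your summation swap reproduces its coefficient comparison. The genuine gap is in the final step. A competitive ratio must hold for \emph{every} input, hence for every $T\ge 1$, but your final bound is $\frac{f(0)}{f(A^T_{\max})\left(1-\alpha\frac{T-1}{T}\right)}$, which dominates the claimed $\frac{1}{f(A^T_{\max})/f(0)-\alpha}$ only when $\frac{T-1}{T}\ge\frac{f(0)}{f(A^T_{\max})}$, and dominates $\frac{(1+\alpha)f(0)}{f(A^T_{\max})}$ only when $T\ge 1+1/\alpha$; at $T=1$ it degenerates to $\frac{f(0)}{f(A^T_{\max})}$, strictly below both claimed ratios whenever $\alpha>0$. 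So as written you have proved a conditional statement, not Theorem~\ref{thm:cr_OR}; your closing remark that large $T$ is exactly the regime the paper cares about conflates your missing case with Corollary~\ref{thm:cr_OR_IO}, which is a separate comparison built \emph{on top of} the unconditional theorem and has a different threshold.

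The loss is traceable to one move: discarding the first-slot term $-LA^T_{\max}s_1D_{(1)}$ (the paper's $B_1$). That term supplies the $T$-th unit of penalty: bounding $s_1D_{(1)}\ge A^I_{\min}D_{\min}\ge A^I_{\min}D^2_{\min}/D_{\max}$ and adding it to your estimate of $A^T_{\max}\sum_\tau V_\tau D_{(\tau)}$ gives a total penalty $\Pi\ge T\cdot LA^T_{\max}D^2_{\min}A^I_{\min}/D_{\max}$, hence $\Pi/S\ge\alpha f(A^T_{\max})$ with no $\frac{T-1}{T}$ factor. Your monotonicity-in-$R$ argument survives with $B_1$ retained, because $\Pi\le LA^T_{\max}S\le\bigl(f(A^T_{\max})-f(0)\bigr)S$ (use $\sum_\tau V_\tau D_{(\tau)}\le(T-1)LD_{\min}A^I_{\min}$ and $S\ge s_1D_{(1)}+(T-1)A^I_{\min}D_{\min}$), and the worst case $R=0$ then yields $\frac{f(0)}{(1-\alpha)f(A^T_{\max})}$ unconditionally, which implies both stated forms since $\frac{1}{1-\alpha}\ge 1+\alpha$ and $(1-\alpha)f(A^T_{\max})\le f(A^T_{\max})-\alpha f(0)$. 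Alternatively, skip the ratio-monotonicity entirely and do what the paper does: keep the penalty additive, note $B_1+B_2\le-\alpha Tf(A^T_{\max})A^I_{\max}D_{\max}$, and convert it into a multiplicative term via $P^{\mathrm{ALG}}\le P^*\le Tf(A^T_{\max})A^I_{\max}D_{\max}$, which gives $P^*\le\frac{f(A^T_{\max})}{f(0)}P^{\mathrm{ALG}}-\alpha P^*$ and $P^*\le\frac{f(A^T_{\max})}{f(0)}P^{\mathrm{ALG}}-\alpha P^{\mathrm{ALG}}$, i.e., both claimed ratios for all $T$.
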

\begin{proof} 
% 这里要简要介绍一下我们的证明思路
The basic idea is that we have $P^*\le D^*$ based on Lemma \ref{lem:PD}, and if we prove $D^*\le \frac{1}{c}P$, then $c$ is the competitive ratio. Details see Appendix~\ref{sec:cr_OR}
\end{proof} 

\noindent \textbf{Insight}: First, similarly to Theorem \ref{thm:cr_IO} and \ref{thm:tcr_IO_lower}, if drift is slight, $L$ is close to 0, and then \ouralg reduces to Inference-Only (which is an almost optimal algorithm in this case). Second, \ouralg relies on the precise estimation of the lower bound ($L$) of the degree of drift ($f'(A^T_{\max})$). If the degree of drift is large, then $L$ and $V_t$ are large too if the estimation is precise, making the model pay more attention to retraining to get good future performance. When $f'(A^T_{\max})$ is underestimated too much, the model pays more attention to inference, reducing its future performance due to a lack of retraining, consistent with term $L$ in CR. 
Third,  the term $\frac{D^2_{\min}}{D^2_{\max}}$ suggests that \ouralg performs better with less variability in input data size.

% Third, the term $\frac{D^2_{\min}}{D^2_{\max}}$ implies that the smaller the degree of heterogeneity of the system, the better \ouralg performs.
% Fourth, if TT is really large, \ouralg may overly focus on retraining at the beginning, resulting reducing user satisfaction  potentially. Then the algorithm designer need to improve the value of AminA_{\min}^I, consistent with A_{\min}^IA_{\min}^I in CR.

\begin{figure}[t]
    \centerline{\includegraphics[height=1.8in]{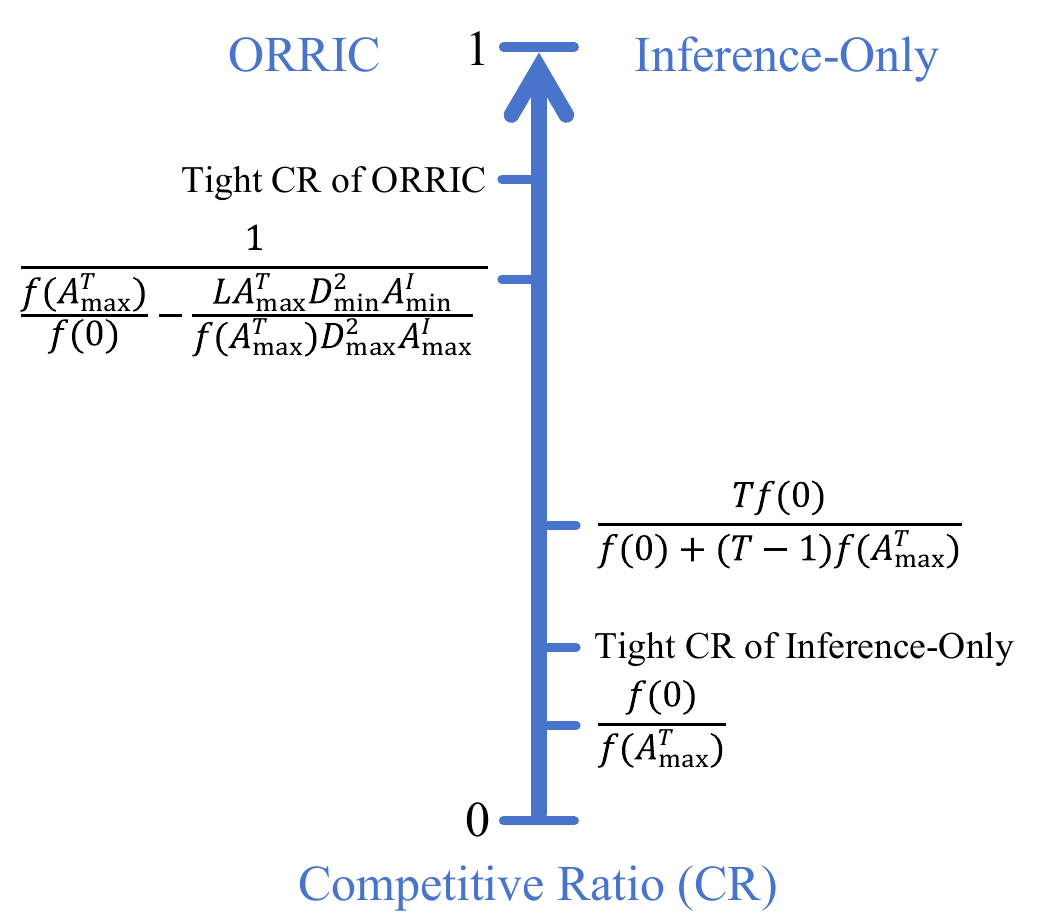}} 
    \caption{Competitive Ratio Result.} 
    \label{fig:cr} 
\vspace{-0.5cm}
\end{figure} 
\begin{corollary}\label{thm:cr_OR_IO}
When $T>\frac{f(A^T_{\max})-f(0)}{\alpha f(0)}$, the tight competitive ratio of \ouralg is strictly better (bigger) than the tight competitive ratio of Inference-Only.

\end{corollary}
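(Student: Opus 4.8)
The plan is to sandwich the two tight competitive ratios between quantities that are already established, and then reduce the desired strict inequality to a single scalar comparison. By Definition~\ref{def:tcr}, the tight competitive ratio of \ouralg is at least any valid competitive ratio of \ouralg; in particular, by Theorem~\ref{thm:cr_OR} it is at least $\frac{1}{f(A^T_{\max})/f(0)-\alpha}$. On the other side, Theorem~\ref{thm:tcr_IO_lower} already caps the tight competitive ratio of Inference-Only from above by $\frac{Tf(0)}{f(0)+(T-1)f(A^T_{\max})}$. Hence it suffices to prove the single inequality
\begin{equation*}
\frac{1}{\frac{f(A^T_{\max})}{f(0)}-\alpha} > \frac{Tf(0)}{f(0)+(T-1)f(A^T_{\max})},
\end{equation*}
because then the chain (tight CR of \ouralg)~$\ge$~(left side)~$>$~(right side)~$\ge$~(tight CR of Inference-Only) yields the claim.

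To carry out the reduction I would set $r:=f(A^T_{\max})/f(0)\ge 1$ and divide the numerator and denominator of the right-hand side by $f(0)$, turning the target into $\frac{1}{r-\alpha}>\frac{T}{1+(T-1)r}$. Cross-multiplying (both denominators being positive, verified below) and cancelling the $Tr$ terms collapses the inequality to $1-r+T\alpha>0$, i.e.\ $T\alpha>r-1$, which is exactly $T>\frac{r-1}{\alpha}=\frac{f(A^T_{\max})-f(0)}{\alpha f(0)}$ --- precisely the hypothesis of the corollary. I deliberately pick the second of the two equivalent competitive-ratio expressions in Theorem~\ref{thm:cr_OR}, rather than $\frac{(1+\alpha)f(0)}{f(A^T_{\max})}$, because it makes this reduction exact and reproduces the stated threshold on $T$ with no slack (the first form would only give a sufficient, slightly stronger, condition via the auxiliary bound $r\ge 1$).

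The one genuine verification step is that all denominators are positive, so that cross-multiplication preserves the inequality direction. For the Inference-Only bound this is immediate. For \ouralg I would check $r-\alpha>0$: Lemma~\ref{lem:bigger} evaluated at $x=0$ gives $f(0)\le f(A^T_{\max})-LA^T_{\max}$, hence $LA^T_{\max}\le f(A^T_{\max})-f(0)<f(A^T_{\max})$, and since $\frac{D^2_{\min}A^I_{\min}}{D^2_{\max}A^I_{\max}}\le 1$ this forces $\alpha<1\le r$. The same estimate shows the threshold $\frac{f(A^T_{\max})-f(0)}{\alpha f(0)}$ is a positive finite number whenever drift is present ($f(A^T_{\max})>f(0)$ and $\alpha>0$), so the condition on $T$ is non-vacuous. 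I do not expect a real obstacle here: the substantive content is an elementary inequality, and the only point requiring care is orienting the two bounds correctly --- a lower bound for \ouralg and an upper bound for Inference-Only --- so that their comparison legitimately separates the two tight ratios.
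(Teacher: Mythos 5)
Your proposal is correct and follows essentially the same route as the paper's proof: lower-bound the tight competitive ratio of \ouralg by the bound $\frac{1}{f(A^T_{\max})/f(0)-\alpha}$ from Theorem~\ref{thm:cr_OR}, upper-bound the tight competitive ratio of Inference-Only via Theorem~\ref{thm:tcr_IO_lower}, and chain these with the middle inequality triggered by the hypothesis on $T$. The only difference is that you explicitly carry out the cross-multiplication showing the middle inequality is equivalent to $T>\frac{f(A^T_{\max})-f(0)}{\alpha f(0)}$ and verify that the denominators are positive (via Lemma~\ref{lem:bigger} at $x=0$), steps the paper asserts without detail.
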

\begin{proof}  
Denote $c_1, c_2$ as the tight competitive ratio of \ouralg and Inference-Only. From Definition \ref{def:tcr} and Theorem \ref{thm:cr_OR}, we have $c_1\ge \frac{1}{\frac{   f(A^T_{\max} )}{ f(0)}-\alpha}$. And when $T>\frac{f(A^T_{\max})-f(0)}{\alpha f(0)}$, we have $ \frac{1}{\frac{   f(A^T_{\max} )}{ f(0)}-\alpha}>\frac{T f(0)}{f(0)+(T-1) f(A^T_{\max})}$. Then, based on Theorem \ref{thm:tcr_IO_lower}, we have $\frac{T f(0)}{f(0)+(T-1) f(A^T_{\max})}\ge c_2$. Besides, from Definition \ref{def:tcr} and Theorem \ref{thm:cr_IO}, we have $c_2\ge \frac{f(0)}{ f(A^T_{\max})}$. We summarize our theoretical results in Fig. \ref{fig:cr}.
\end{proof}             

\noindent \textbf{Insight}: Due to the rough approximation to the objective function of (\ref{pro:P}), \ouralg may not fully represent the potentiality of model retraining and inference co-location paradigm. However, the tight competitive ratio of \ouralg still surpasses that of Inference-Only when drift occurs ($L>0$) for a sufficiently lengthy time ($T>\frac{f(A^T_{\max})-f(0)}{\alpha f(0)}$). This implies that, in such scenarios, the worst-case performance of the model retraining and inference co-location paradigm is strictly better than that of the traditional Inference-Only paradigm.

\section{Experiments}\label{sec:exp}
We conduct the experiments to answer the following question: Can model retraining and inference co-location paradigm alleviate the negative effect of data drift on model performance?
 
\subsection{Setup}

CIFAR-10-C\cite{HendrycksD19}, a dataset that is generated by adding 15 common corruptions and 4 extra corruptions to the test dataset of CIFAR-10, is typically used in experiments of out-of-distribution generalization or continual test-time adaptation\cite{Wang_2022_CVPR}.  We treat these corruptions as imitations of data drift.
% The corruptions mainly belong to 4 categories: noise, blur, weather, and digital.
We first train MobileNetV2 and ResNet50 on the training set of CIFAR-10, then test them on CIFAR-10-C and the test set of CIFAR-10 (the ``original" column in Table \ref{tab:cifar10}) separately. Specially, we use original images (whose resolution is $32*32$) for training, while resized images (whose resolution may be $32*32$, $28*28$, $24*24$, or $20*20$) are used for testing. We do not use more kinds of lower resolution (e.g., $16*16$) because the predictions given by the trained MobileNetV2 and ResNet50 are random in these cases. We also give the computing resource consumption (measured by MACs, which can be calculated by using a third-party library like PyTorch-OpCounter) and latency (measured on our NVIDIA A40 server) results of MobileNetV2 and ResNet50 on a single image at different resolutions. These varying resolutions represent distinct inference configurations ($A^I_j$). The retraining configurations ($A^T_i$) are delineated by the sampling ratios (0, 0.1, 0.2, 0.3, 0.5, 1.0), denoting the portion of uploaded data on the $t$-th time slot ($D_{(t)}$) utilized for one epoch of model retraining.

% The observation that ResNet50 outperforms MobileNetV2 on each corruption dataset of CIFAR-10-C is consistent with the statement in \ref{sec:modeldrift} that a model with lower parameters has less generalization power. 

\subsection{Results}
% 我们在损坏类型为fog的数据集上进行三种方法的性能比较：
We compare the following three methods on CIFAR-10-C: Teacher-Only (using ResNet50 for inference and without retraining), Student-Only (using MobileNetV2 for inference and without retraining), and \ouralg (using MobileNetV2 for inference and using ResNet50 to retrain MobileNetV2). 

% Without loss of generality, we set $D_{(t)}=1000, \forall t$ for all three methods and set $C_{(t)}\sim \mathbb{U}(C_1,C_2), \forall t$ for \ouralg, where $\mathbb{U}(C_1,C_2)$ is a uniform distribution between $C_1$ (the computational resources used by Student-Only, i.e., the MACs when MobileNetV2 performs inference on 1000 images whose resolution is $32*32$) and $C_2$ (the computational resources used by Teacher-Only, i.e., the MACs when ResNet50 performs inference on 1000 images whose resolution is $32*32$). 
% we set $f(A^T_{\max})$ to $73.29$ for the dataset of ``gaussian noise", ``impulse noise", ``shot noise", and ``speckle noise", and to $79.57$ for other datasets (see the ``original" column)
Without loss of generality, we set $D_{(t)}=1000, \forall t$ for all three methods and $C_{(t)}\sim \mathbb{U}(C_1,C_2), \forall t$ for \ouralg, where $\mathbb{U}(C_1,C_2)$ is a uniform distribution between $C_1$ (the MACs when MobileNetV2 performs inference on 1000 images whose resolution is $32*32$) and $C_2$ (the MACs when ResNet50 performs inference on 1000 images whose resolution is $32*32$). To satisfy Property \ref{asp:5}, we delete the $32*32$ inference configuration and set $f(A^T_{\max})$ to $0.7329$ for the dataset of ``gaussian noise", ``impulse noise", ``shot noise", and ``speckle noise". For other datasets, we set $f(A^T_{\max})$ to $0.7957$ (see the ``original" column). We set $L$ to $0.01$ and set $A_i^T = \beta C_i^T$ (where the $\beta$ is a normalization coefficient, making $\max_i\{A_i^T\}=1$).

The real-time accuracy of these three methods on the ``fog" corruption dataset is shown in Fig. \ref{fig:comp} (a). Because each type of corruption dataset in CIFAR-10-C has 5 severity levels, and the first 10,000 images are at severity 1, while the last 10,000 images are at severity 5\cite{HendrycksD19}, the accuracy of all three methods drops suddenly and periodically. However, the curve of \ouralg is almost always higher than the curve of Student-Only, showing the benefit of model retraining. We also give the average accuracy of \ouralg on other corruption datasets in the last row of Table \ref{tab:cifar10}.

The resource consumption and latency of these three methods on the ``fog" corruption dataset can be calculated using the parameters given in Table \ref{tab:cifar10}, and we report the Accuracy-Cost-Latency trade-off of these three methods while normalizing the maximum value of each axis to 1, see Fig. \ref{fig:comp} (b). \ouralg surpasses or equals the Student-Only algorithm in terms of accuracy and latency while utilizing idle available computing resources. \ouralg surpasses the Teacher-Only algorithm in terms of cost and latency while maintaining good accuracy. In general, the model retraining and inference co-location paradigm can utilize idle available resources to improve model accuracy while maintaining low latency, thereby alleviating the negative impact of drift on accuracy.

% \noindent\textbf{Details about the setting of \ouralg} :

% using \cite{measureflops}
 
% \cite{measureflops} 

\begin{figure}[t]
  \centering
\subfigure [Real-time Accuracy Results Comparison.]{\includegraphics[width=0.62\linewidth]{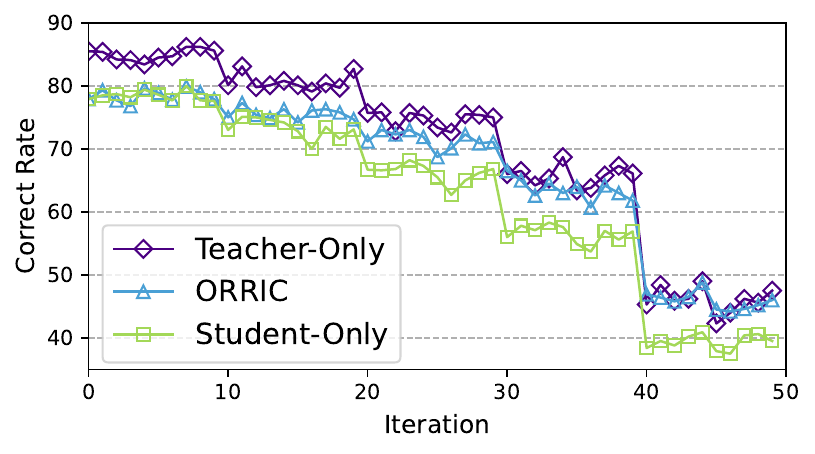}}\ 
   \subfigure[Accuracy-Cost-Latency Trade-off Comparison.]{\includegraphics[width=0.36\linewidth]{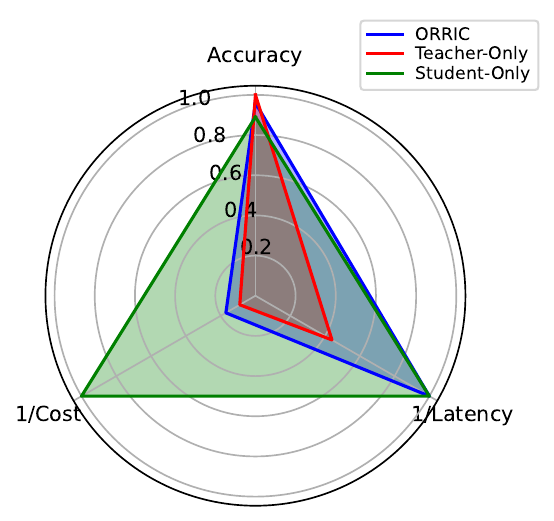}}\\
      \caption{Results on the ``fog" Corruption Dataset of CIFAR-10-C.}
    \label{fig:comp}
  \vspace{-15pt}
\end{figure}

% \begin{figure}[t]
%     \centerline{\includegraphics[width=0.3\textwidth]{ex_cr.pdf}} 
%     \caption{Experiment result.} 
%     \label{fig:exp}
% \end{figure}

% \begin{figure}[t]
%     \centerline{\includegraphics[width=0.25\textwidth]{ex_radar.pdf}} 
%     \caption{Experiment result.} 
%     \label{fig:exp}
% \end{figure}

\section{Conclusion}
In this paper, we study the online allocation in the model retraining and inference co-location paradigm. We model the current model performance as a function of past retraining configuration and current inference configuration and then propose a linear complexity online algorithm (named \ouralg). Our competitive analysis implies the advantages and applications of model retraining and inference co-location paradigm over the traditional Inference-Only paradigm. Experiments on the CIFAR-10-C validate the effectiveness of model retraining and inference co-location in drift scenarios.

\section{Acknowledgment}
% The authors would like to thank the reviewers for their insightful and valuable comments. Discussions with Kongyange Zhao, Tao Ouyang and Qing Ling are gratefully acknowledged. 

The authors appreciate the reviewers for their insightful and valuable comments. Discussions with Kongyange Zhao, Tao Ouyang and Qing Ling are gratefully acknowledged.

\newpage % \section{appendix}  
\appendix
\subsection{Proof of Theorem \ref{thm:non-convex}}\label{sec:non-convex}
\begin{proof}
Supposing $(x_1,y_1)$ and $(x_2,y_2)$ are two points in the domain of $f(x)y$, and denoting $\bar x = \alpha x_1 + (1-\alpha)x_2$ and $\bar y = \alpha y_1 + (1-\alpha)y_2$, where $0<\alpha<1$, then $E=
     f(\bar x)\bar y - \left[\alpha f(x_1)y_1+ (1-\alpha)f(x_2)y_2\right]  
    = \alpha \left[f(\bar x) - f(x_1)\right] y_1 + (1-\alpha)\left[f(\bar x) - f(x_2)\right]y_2$. 
If there exist values for $\alpha, x_1, x_2,y_1, y_2$ that make $E$ less than $0$, then $f(x)y$ is nonconcave. We can also prove that $f(x)y$ is nonconvex in the same way. Therefore, we can conclude that $f(x)y$ is nonconvex-nonconcave. 

If $f(x)$ is twice-differentiable, the indefiniteness of the Hessian matrix of $f(x)y$ can prove the theorem too.
\end{proof}

% \subsection{Proof of Theorem ?????????????????????????????????????????????????????????????????????\ref{thm:np-hard}  }\label{sec:np-hard}
% \begin{proof}
% we try to reduce the original problem to a backpack problem.
% \end{proof}

\subsection{Proof of Theorem \ref{thm:tcr_IO_lower}}\label{sec:tcr_IO_lower}

\begin{proof}
We show this fact using proof by contradiction. Suppose a situation where computing resources are sufficient and $D_{(t)}=D$ on every time slot, then  $P^* = f(0)A_{\max}^ID_{(1)}+\sum_{t=2}^T f(A^T_{\max})A_{\max}^ID_{(t)}=(f(0)+\sum_{t=2}^T f(A^T_{\max}))A_{\max}^ID$, while $P = \sum_{t=1}^T f(0)A_{\max}^ID_{(t)}=\left(\sum_{t=1}^T f(0)\right)A_{\max}^ID$, then we have: $\frac{P}{P^*} = \frac{\sum_{t=1}^T f(0)}{f(0)+\sum_{t=2}^T f(A^T_{\max})}=\frac{T f(0)}{f(0)+(T-1) f(A^T_{\max})}$. Suppose the tight competitive ratio of the Inference-Only algorithm (denoted as $\bar c$) is strictly bigger than $\frac{T f(0)}{f(0)+(T-1) f(A^T_{\max})}$, which means for any input, $P\ge \bar c P^*>\frac{T f(0)}{f(0)+(T-1) f(A^T_{\max})}P^*$, but we have found an input that makes $P=\frac{T f(0)}{f(0)+(T-1) f(A^T_{\max})}P^*$, the contradiction has arisen. Then we show an upper bound of the tight competitive ratio  $\bar c$ of the Inference-Only algorithm is  $\frac{T f(0)}{f(0)+(T-1) f(A^T_{\max})}$.  
\end{proof} 

\subsection{Proof of the Correctness of \ouralg}\label{sec:ORRIC}
\begin{proof}  
% 算法正确性：假设DtDt的最优解中训练和推理的索引为 a,ba,b，则有 CTa+CIb≤UC_{a}^T + C_{b}^I\le U，我们仅需证明 (a,b,VATa+WAIb)(a,b,VA^T_a+WA^I_b)一定包含在优先队列中。那么

% 1. 算法结束于i>Mi>M 或者j<1j<1，不妨假设结束于j<1j<1（假设算法结束于i>Mi>M过程类似），那么jj从NN到达00的过程中一定经过了bb。
% 2. 当 jj 经过一次减法到达bb时，设此时的i=a1i=a_1；
%    第一种情况：如果a1≤aa_1\le a，那么CTa1+CIb≤UC_{a_1}^T + C_{b}^I\le U，于是按照算法接下来 ii  从a1a_1开始增加直到CTi+CIb>UC_{i}^T + C_{b}^I> U停止或者到i>Mi>M停止，此时i>ai>a。所以(a,b,VATa+WAIb)(a,b,VA^T_a+WA^I_b)一定包含在优先队列中。
%    第二种情况：如果a1>aa_1> a，那么上一轮为(a1,b+1)(a_1,b+1)（此时CTa1+CIb+1>CTa+CIb+1>UC_{a_1 }^T + C_{b+1}^I>C_{a}^T + C_{b+1}^I>U），再上一轮不可能是(a1−1,b+1)(a_1-1,b+1)（因为CTa1−1+CIb+1≥CTa+CIb+1>UC_{a_1-1}^T + C_{b+1}^I\ge C_{a}^T + C_{b+1}^I>U，所以(a1−1,b+1)(a_1-1,b+1)的下一轮只可能是(a1−1,b)(a_1-1,b)），所以只可能是(a1,b+2)(a_1,b+2)；再上一轮不可能是(a1−1,b+2)(a_1-1,b+2)，所以只可能是(a1,b+3)(a_1,b+3)；如此推理下去直到(a1,N)(a_1,N)；此时再上一轮只可能是(a1−1,N)(a_1-1,N)，再上一轮只可能是(a1−2,N)(a_1-2,N)...直到a1−ka_1-k 满足CTa1−k+CIN<UC_{a_1-k}^T + C_{N}^I< U。于是这些轮次中一定有(a,N)(a,N)；但按照算法(a,N)(a,N)的下一轮为(a,N−1)(a,N-1)而非(a+1,N)(a+1,N)。矛盾。所以这种情况不可能存在。

Let's assume that the optimal configuration indices for retraining and inference in $Dt$ are $a$ and $b$, so $C_{a}^T + C_{b}^I \le U$. We only need to demonstrate that \ouralg must have explored the pair $(a, b, VA^T_a + WA^I_b)$. \ouralg terminates when either $i>M$ or $j<1$. Let's consider the scenario where it terminates with $j<1$ (the case for terminating with $i>M$ is similar). In this case, $j$ will decrease from $N$ to $0$. When $j$ reaches $b$, let's assume that $i=a_1$ at this moment.

First case: If $a_1 \le a$, then $C_{a_1}^T + C_{b}^I \le U$. According to the algorithm, $i$ will start increasing from $a_1$ until $C_{i}^T + C_{b}^I > U$ or until $i>M$, whichever happens first. At this point, $i>a$, so $(a, b, VA^T_a + WA^I_b)$ must have been explored by \ouralg.

Second case: If $a_1 > a$, then the previous iteration is $(a_1, b+1)$ (where $C_{a_1}^T + C_{b+1}^I > C_{a}^T + C_{b+1}^I > U$). And the former iteration of it won't be $(a_1-1, b+1)$ (since $C_{a_1-1}^T + C_{b+1}^I \ge C_{a}^T + C_{b+1}^I > U$, so $(a_1-1, b)$ is next to $(a_1-1, b+1)$). Therefore, the next pairs are $(a_1, b+2)$, $(a_1, b+3)$, and so on until $(a_1, N)$ is reached. At this point, the pair before must be $(a_1-1, N)$, and $(a_1-2, N)$, and so on until $a_1-k$ is found such that $C_{a_1-k}^T + C_{N}^I < U$. In this case, $(a, N)$ must be present in these iterations. However, according to the algorithm, the next iteration from $(a, N)$ is $(a, N-1)$, not $(a+1, N)$. Therefore, this case is not possible. 
\end{proof}

\subsection{Proof of Theorem \ref{thm:cr_OR}}\label{sec:cr_OR}
\begin{proof}

\begin{align*}
D^*&= \underbracket{
f\left(A^T_{\max}\right)\sum_{j=1}^Ny_j(1)A_j^ID_{(1)}}_{A_1} \underbracket{-LA^T_{\max}\sum_{j=1}^Ny_j(1)A_j^ID_{(1)}}_{B_1} 
\\
& +
\underbracket{\sum_{t=2}^T f\left(A^T_{\max}\right)\sum_{j=1}^Ny_j(t)A_j^ID_{(t)}}_{A_2}
\\
&+\underbracket{\sum_{t=1}^{T-1}L \frac{D_{\min}A_{\min}^I}{D_{\max}}\frac{1}{t}\left( \sum_{\tau=1}^t D_{(\tau)}\sum_{i=1}^Mx_i(\tau)A_i^T\right)}_{C}
\\ 
&\underbracket{-\sum_{t=1}^{T-1}L\frac{D_{\min}A_{\min}^I}{D_{\max}}\frac{1}{t}\sum_{\tau=1}^t D_{(\tau)}A^T_{\max}}_{B_2} 
\end{align*}

% $D^*= \underbracket{
% f\left(A^T_{\max}\right)\sum_{j=1}^Ny_j(1)A_j^ID_{(1)}}_{A_1} \\ \underbracket{-LA^T_{\max}\sum_{j=1}^Ny_j(1)A_j^ID_{(1)}}_{B_1} 
% +
% \underbracket{\sum_{t=2}^T f\left(A^T_{\max}\right)\sum_{j=1}^Ny_j(t)A_j^ID_{(t)}}_{A_2}\\
% +\underbracket{\sum_{t=1}^{T-1}L \frac{D_{\min}A_{\min}^I}{D_{\max}}\frac{1}{t}\left( \sum_{\tau=1}^t D_{(\tau)}\sum_{i=1}^Mx_i(\tau)A_i^T\right)}_{C}
% \underbracket{-\sum_{t=1}^{T-1}L\frac{D_{\min}A_{\min}^I}{D_{\max}}\frac{1}{t}\sum_{\tau=1}^t D_{(\tau)}A^T_{\max}}_{B_2} $

For term $B_1$, we have $B_1\le -\frac{LA^T_{\max}D^2_{\min}A_{\min}^I}{D_{\max}}$ by $\sum_{j=1}^Ny_j(1)A_j^ID_{(1)}\ge D_{\min}A_{\min}^I\ge D_{\min}A_{\min}^I\frac{D_{\min}}{D_{\max}}$. For term $B_2$,  we have $B_2\le-(T-1)\frac{LA^T_{\max}D^2_{\min}A_{\min}^I}{D_{\max}}$ due to $\sum_{\tau=1}^t D_{(\tau)}\ge tD_{\min}$. Finally $B_1+B_2\le-T\frac{LA^T_{\max}D^2_{\min}A_{\min}^I}{D_{\max}}$.

For term $C$, since the increasing and concave property of $f$ (Assumption \ref{asp:1}), $\frac{\sum_{\tau=1}^t D_{(\tau)}\sum_{i=1}^M x_i(\tau)A_i^T}{\sum_{\tau=1}^t D_{(\tau)}}\le A^T_{\max}$ by the definition of $A^T_{\max}$ and Assumption \ref{asp:4}, we have the following fact: $L\le f'\left({A^T_{\max}}\right)\le f'\left(\frac{\sum_{\tau=1}^t D_{(\tau)}\sum_{i=1}^M x_i(\tau)A_i^T}{\sum_{\tau=1}^t D_{(\tau)}}\right)$. Combining this fact and $\frac{D_{\min}A_{\min}^I}{D_{\max}}\frac{1}{t}<\frac{\sum_{j=1}^N y_j(t+1)A_j^ID_{(t+1)}}{\sum_{\tau=1}^t D_{(\tau)}}$, we get: 
$C\le \sum_{t=1}^{T-1}\left(f(0)+ f'\left(\frac{\sum_{\tau=1}^t D_{(\tau)}\sum_{i=1}^Mx_i(\tau)A_i^T}{\sum_{\tau=1}^t D_{(\tau)}}\right)\right.
\\\left.\frac{\sum_{\tau=1}^t D_{(\tau)}\sum_{i=1}^M x_i(\tau)A_i^T}{\sum_{\tau=1}^t D_{(\tau)}}\vphantom{\frac{\sum_{\tau=1}^t D_{(\tau)}\sum_{i=1}^M x_i(\tau)A_i^T}{\sum_{\tau=1}^t D_{(\tau)}}}\right)\sum_{j=1}^Ny_j(t+1)A_j^ID_{(t+1)}
-\sum_{t=1}^{T-1}f(0)\sum_{j=1}^Ny_j(t+1)A_j^ID_{(t+1)}$. 

% $C\le -\sum_{t=1}^{T-1}f(0)\sum_{j=1}^Ny_j(t+1) A_j^ID_{(t+1)} + \sum_{t=1}^{T-1}(f(0)+ f'\left(\frac{\sum_{\tau=1}^t D_{(\tau)}\sum_{i=1}^Mx_i(\tau)A_i^T}{\sum_{\tau=1}^t D_{(\tau)}}\right)
% \frac{\sum_{\tau=1}^t D_{(\tau)}\sum_{i=1}^M x_i(\tau)A_i^T}{\sum_{\tau=1}^t D_{(\tau)}}) \sum_{j=1}^N y_j(t+1) A_j^I D_{(t+1)}$. 

% $C\le -\sum_{t=1}^{T-1}f(0)\sum_{j=1}^Ny_j(t+1) A_j^ID_{(t+1)} + \sum_{t=1}^{T-1}\left(f(0)+ \\f'\left(\frac{\sum_{\tau=1}^t D_{(\tau)}\sum_{i=1}^Mx_i(\tau)A_i^T}{\sum_{\tau=1}^t D_{(\tau)}}\right)
% \frac{\sum_{\tau=1}^t D_{(\tau)}\sum_{i=1}^M x_i(\tau)A_i^T}{\sum_{\tau=1}^t D_{(\tau)}}\right) \sum_{j=1}^N y_j(t+1) A_j^I D_{(t+1)}$. 
% \begin{align*}
%     C\le &\sum_{t=1}^{T-1}\left(f(0)+ f'\left(\frac{\sum_{\tau=1}^t D_{(\tau)}\sum_{i=1}^Mx_i(\tau)A_i^T}{\sum_{\tau=1}^t D_{(\tau)}}\right)\right.
% \\&\left.\frac{\sum_{\tau=1}^t D_{(\tau)}\sum_{i=1}^M x_i(\tau)A_i^T}{\sum_{\tau=1}^t D_{(\tau)}}\vphantom{\frac{\sum_{\tau=1}^t D_{(\tau)}\sum_{i=1}^M x_i(\tau)A_i^T}{\sum_{\tau=1}^t D_{(\tau)}}}\right)\sum_{j=1}^Ny_j(t+1)A_j^ID_{(t+1)}\\&
% -\sum_{t=1}^{T-1}f(0)\sum_{j=1}^Ny_j(t+1)A_j^ID_{(t+1)} 
% \end{align*} 
Since the assumed concave property of $f$ (Assumption \ref{asp:1}), we have $f(0)\le f(x)+(0-x)f'(x)$ , i.e., $f(0)+xf'(x)\le f(x)$. Then, $C\le  \sum_{t= 1}^{T-1}
 f\left(\frac{\sum_{\tau=1}^t D_{(\tau)}\sum_{i=1}^Mx_i(\tau)A_i^T}{\sum_{\tau=1}^t D_{(\tau)}}\right) \sum_{j=1}^N y_j(t+1)
A_j^ID_{(t+1)}-\sum_{t=1}^{T-1}f(0)\sum_{j=1}^N y_j(t+1)A_j^ID_{(t+1)}   
= P-\sum_{t=0}^{T-1}f(0)\sum_{j=1}^Ny_j(t+1)A_j^ID_{(t+1)}$.

% \begin{align*}
% C\le &\sum_{t= 1}^{T-1}
%  f\left(\frac{\sum_{\tau=1}^t D_{(\tau)}\sum_{i=1}^Mx_i(\tau)A_i^T}{\sum_{\tau=1}^t D_{(\tau)}}\right) \sum_{j=1}^N y_j(t+1)
% \\&A_j^ID_{(t+1)}-\sum_{t=1}^{T-1}f(0)\sum_{j=1}^N y_j(t+1)A_j^ID_{(t+1)}\\  
% =& P-\sum_{t=0}^{T-1}f(0)\sum_{j=1}^Ny_j(t+1)A_j^ID_{(t+1)} 
% \end{align*}
 
Based on all of the above analysis, we have: $P^*\le D^*=B_1+B_2+C+A_1+A_2
\le -T\frac{LA^T_{\max}D^2_{\min}A_{\min}^I}{D_{\max}} + P- \sum_{t=0}^{T-1}f(0)\sum_{j=1}^Ny_j(t+1)A_j^ID_{(t+1)}  +\sum_{t=1}^T f(A^T_{\max})\sum_{j=1}^N y_j(t) A_j^ID_{(t)}
= -T\frac{LA^T_{\max}D^2_{\min}A_{\min}^I}{D_{\max}} +P+\frac{   f(A^T_{\max} )-f(0)}{ f(0)}\sum_{t=1}^T f(0)\sum_{j=1}^Ny_j(t)A_j^ID_{(t)}
\le -T\frac{LA^T_{\max}D^2_{\min}A_{\min}^I}{D_{\max}} + P+\frac{   f(A^T_{\max} )-f(0)}{ f(0)}P
=\frac{   f(A^T_{\max} )}{ f(0)}P
-\alpha Tf(A^T_{\max})A^I_{\max}D_{\max}$, where $\alpha=\frac{LA^T_{\max}D^2_{\min}A_{\min}^I}{f(A^T_{\max})D^2_{\max}A^I_{\max}}$.
% \begin{align*}
% &P^*\le D^*=A_1+A_2+B_1+B_2+C
% \\ 
% \le& \sum_{t=1}^T f(A^T_{\max})A_j^ID_{(t)}\underbracket{-T\frac{LA^T_{\max}D^2_{\min}A_{\min}^I}{D_{\max}}}_{B}
% \\&+P-\sum_{t=0}^{T-1}f(0)\sum_{j=1}^Ny_j(t+1)A_j^ID_{(t+1)}
% \\=& \frac{   f(A^T_{\max} )-f(0)}{ f(0)}\sum_{t=1}^T f(0)\sum_{j=1}^Ny_j(t)A_j^ID_{(t)}+P+B
% \\
% \le &\frac{   f(A^T_{\max} )-f(0)}{ f(0)}P+P+B
% \\=&\frac{   f(A^T_{\max} )}{ f(0)}P
% -\frac{\frac{TLA^T_{\max}D^2_{\min}A_{\min}^I}{D_{\max}}}{Tf(A^T_{\max})A^I_{\max}D_{\max}}Tf(A^T_{\max})A^I_{\max}D_{\max}
% \end{align*}  
Further from the fact that $P\le P^*\le Tf(A^T_{\max})A^I_{\max}D_{\max}$, we have: 1) $P^* \le\frac{f(A^T_{\max} )}{ f(0)}P-\alpha P^*$, 2) $P^* \le
\frac{f(A^T_{\max} )}{ f(0)}P-\alpha P$.
% \begin{align*}
% 1)\ P^* \le
% \frac{f(A^T_{\max} )}{ f(0)}P-\alpha P^*\quad\quad 2)\ P^* \le
% \frac{f(A^T_{\max} )}{ f(0)}P-\alpha P
% \end{align*}  
Then we prove that the competitive ratio of \ouralg is $\frac{(1+\alpha)f(0)}{   f(A^T_{\max} )}$ or $\frac{1}{\frac{   f(A^T_{\max} )}{ f(0)}-\alpha}$, where $\alpha=\frac{LA^T_{\max}D^2_{\min}A_{\min}^I}{f(A^T_{\max})D^2_{\max}A^I_{\max}}$.
\end{proof}

\clearpage
\bibliographystyle{IEEEtran}
\bibliography{main}
\end{document}